\newtheorem{assumption}{Assumption}
\newtheorem{definition}{Definition}
\newtheorem{theorem}{Theorem}
\newtheorem{corollary}{Corollary}
\newtheorem{lemma}[theorem]{Lemma}
\crefname{observation}{Observation}{Observations}
\DeclareMathOperator*{\E}{\mathbb{E}}
\newcommand{\set}[2]{\left\{#1\;\middle|\;#2\right\}}
\newcommand{\TV}[2]{\left\|#1-#2\right\|_{\mathrm{TV}}}
\newcommand{\mis}[3]{\mathrm{GCE}_{#1}(#3,#2)}
\newcommand{\mc}[3]{\mathrm{Mis}_{#1}(#3,#2)}
\newcommand{\coarsen}[2]{#1^{#2}}
\newcommand{\sparsity}{s}
\def\train{x_\mathrm{train}}
\def\trainvec{{\bf x}_{\mathrm{train}}}
\def\1{\mathbf{1}}
\def\supp{\mathrm{supp}}
\def\eps{\varepsilon}
\def\A{{\mathcal{A}}}
\def\B{{\mathcal{B}}}
\def\D{{\mathcal{D}}}
\def\P{{\mathcal{P}}}
\def\V{{\mathcal{V}}}
\def\reals{{\mathbb{R}}}
\def\nats{{\mathbb{N}}}
\newcommand{\prompt}[1]{\verb+#1+}
\title{Calibrated Language Models Must Hallucinate}
\author{Adam Tauman Kalai\thanks{This work was done while the author was at Microsoft Research. Email:{\tt adam@kal.ai}}\\OpenAI \and Santosh S. Vempala\thanks{Supported in part by NSF award CCF-2134105 and a Simons Investigator award. Email: {\tt vempala@gatech.edu}}\\Georgia Tech}
\begin{document}

\maketitle

\begin{abstract}
Recent language models generate false but plausible-sounding text with surprising frequency. Such ``hallucinations'' are an obstacle to the usability of language-based AI systems and can harm people who rely upon their outputs. This work shows that there is an inherent statistical lower-bound on the rate that pretrained language models hallucinate certain types of facts, having nothing to do with the transformer LM architecture or data quality. For ``arbitrary'' facts whose veracity cannot be determined from the training data, we show that hallucinations must occur at a certain rate for language models that satisfy a statistical calibration condition appropriate for generative language models. Specifically, if the maximum probability of any fact is bounded, we show that the probability of generating a hallucination is close to the fraction of facts that occur exactly once in the training data (a ``Good-Turing'' estimate), even assuming ideal training data without errors. 

One conclusion is that models pretrained to be sufficiently good \textit{predictors} (i.e., calibrated) may require post-training to mitigate hallucinations on the type of arbitrary facts that tend to appear once in the training set. However, our analysis also suggests that there is no statistical reason that pretraining will lead to hallucination on facts that tend to appear more than once in the training data (like references to publications such as articles and books, whose hallucinations have been particularly notable and problematic)  or on systematic facts (like arithmetic calculations). Therefore, different architectures and learning algorithms may mitigate these latter types of hallucinations.
\end{abstract}

\section{Introduction}

The surprisingly high rate at which Language Models (LMs) generate false information, such as references to non-existent article titles, has recently emerged as a critical issue. The popular term \textit{hallucination} is defined in the \citet{merriam-webster_definition_2023} dictionary as ``a plausible but false or misleading response generated by an artificial intelligence algorithm.'' In one case, lawyers were fined \$5,000 for submitting legal research containing hallucinated legal cases that they believed were correct \citep{shin_humiliated_2023}. In healthcare, hallucinations could be life threatening to patients and physicians are concerned about malpractice cases \citep{mello_chatgpt_2023}. Furthermore, hallucinations have been widely reported on by the media \citep{weise_when_2023}, and the U.S.\ President recently put out an Executive Order calling for, among other things, safeguards against misleading outputs from generative AI systems \citep{biden_executive_2023}. 
This paper presents statistical lower-bounds on the rate of hallucination for LMs that are calibrated predictors of facts. This helps illuminate the nature of hallucination. It should \textit{not} be taken to mean that hallucination is inevitable. Rather, as we discuss it is consistent with the fact that practitioners have increasingly been augmenting ``pretraining'' procedures by ``post-training'' procedures that reduce hallucination rates at the cost of reducing calibration as well.

An LM is simply a probability distribution $D$ over sequences of \textit{tokens}, i.e., words or other character sequences. Clearly any LM which predicts every string with positive probability (a common property of LMs) will necessarily hallucinate with positive probability. However, if this probability is small, then hallucinations will be rare. Thus it is crucial to quantify the rate of hallucinations. Every distribution $D$ can equivalently be represented by its log-probabilities over entire sequences or conditional log-probabilities of the subsequent token given previous ones, 
$\log D(t_1 \ldots t_m) = \sum_{i=1}^m \log D(t_i \mid t_1 \ldots t_{i-1}).$ This mathematically trivial equivalence\footnote{The equivalence between next-token-prediction and document generation holds only if one disregards computational costs. It is similar to the statement that book can be output one word at a time, in order, in a single pass, even though writing a book generally involves many iterations. 
} has a profound implication: any LM can be used either to \textit{generate} text or \textit{predict} the next token in naturally occurring text conditioned on the previous tokens, though prediction and generation have different desiderata. 
For instance, consider the sentence,
\begin{quote}
    \textit{Alexa Wilkins had a tuna sandwich at Salumeria for lunch last Tuesday because the reviews said that it was divine.}
\end{quote}
Sentences of this sort could be likely under a predictive LM, for example, to offer suggestions to reduce typing on phones \cite[e.g.,][]{tanaka-ishii_word-based_2007}. It may be desirable to predict \textit{sandwich} as an option of a word to type after the word \textit{tuna}, along with other likely words such as \textit{salad}. On the other hand, the vast majority of sentences of this sort would be false if randomly fabricated by a generative LM. This paper shows LMs with good predictive text performance should hallucinate, even under ideal conditions. Notably in the first stage of pretraining, common today, the generative LM is optimized for predictive text performance \citep{chowdhery_palm_2022,openai_gpt-4_2023}.  Moreover, it gives a lower-bound on the rate of hallucination, which can shed light on the different rates at which different types of facts should be hallucinated.

What is common to both potential references and the example above (which we shall refer to as 5W = Who-Ate-What-When-Where-Why factoids), above is that they are \textit{arbitrary} in the sense that neither one can be determined systematically by rules---one cannot determine the veracity of most such facts that are not present in the training data. This in contrast to facts whose veracity can be determined systematically. We quantify how much LMs should hallucinate \textit{even in an simplified setting} with several ideal properties. Because we are giving statistical lower-bounds, we favor simplicity over generality as the point of our lower bounds is to identify a fundamental cause of LM hallucination. Similar to classification, where one seeks a lower-bound for the difficulty of classification in noiseless settings (but noise-tolerant classifications algorithms), we seek a hallucination lower-bound that holds in the simplest setting where training data is i.i.d.\ without factual errors. 

\paragraph{Calibration for generative models.} Calibration is a natural requirement of a probabilistic predictor meaning that its probabilities can be interpreted as accurate confidences in its own predictions. \citet{dawid_well-calibrated_1982} introduced the notion with the example of a weather forecaster: among days when they predict 30\% chance of rain, it rains approximately 30\% of the time. Calibration metrics have been extensively studied for LMs \citep[see, e.g.,][]{braverman_calibration_2020,jiang_how_2021,zhao_calibrate_2021}. \cref{fig:gpt4} illustrates multi-class calibration for GPT-4, a large modern LM, on a multiple choice exam. Post-training alignment was applied to reduce hallucination (among other factors) but was also found to reduce calibration \citep{openai_gpt-4_2023}. 
Calibration is both \textit{meaningful}, since a calibrated predictor's probabilities are interpretable as accurate confidences, and statistically \textit{achievable}.\footnote{Simple post-hoc probability modification procedures can calibrate any uncalibrated predictor and simultaneously increase its accuracy under metrics such as cross-entropy \citep[see, e.g.,][]{braverman_calibration_2020,blasiok_loss_2023}.}
In contrast, the perfectly accurate predictor would also be calibrated but may be impossible to learn.
However, calibration is only a minimal requirement for predictors, as not all calibrated models are useful predictors: the predictor which always outputs the annual average probability of rain is trivially calibrated. 

We provide a natural generalization of calibration to generative models. Our notion differs from prior uses of calibration in LMs which were at the token-level. The problem with analyzing raw token probabilities is that there are many ways to describe any fact in natural language, and thus having calibrated token probabilities is not particularly meaningful. To illustrate, consider the old \textit{trigram} LMs, which predict next-token probabilities based only on the previous two tokens (e.g., words). Trigram models are naturally calibrated at the token level, and yet hallucination was not a major problem for trigram models. This is because they mostly generate gibberish. 
Instead, our semantic-level calibration considers the probability distribution over pieces of information (facts or hallucinations) contained in the text.  We say an LM is calibrated if, for any probability $z \in [0,1]$, among the pieces of information it generates with probability $\approx z$, such information occurs on average in a $\approx z$ fraction of naturally occurring language (ideally the distribution from which training data was drawn). 

\paragraph{Why LMs hallucinate.} Hallucinations have mystified LM users and some researchers alike. \cref{sec:related}
surveys numerous hypotheses that have been studied for why LMs hallucinate, ranging from inaccurate or outdated training data to the next-token log-likelihood objective in training. Hallucination can also be due to an adversarial or out-of-distribution \textit{prompt}: a textual prefix provided for the LM to complete which establishes context. For example, there may be no factual  completion to a fabricated prompt such as, \textit{The 15 Elves of San Salami are named}\ldots.\footnote{A completion such as \textit{\ldots actually never mind, I have no idea} is unlikely given training data that does not have such retractions.} In contrast, our work shows that even in an ideal, unchanging world with perfect training data and no prompts, one should expect hallucinations from LMs which are calibrated.

\paragraph{Simplified setting.}
We consider a stationary language distribution $D_L \in \Delta(X)$ over documents  (i.e., strings of text) $x \in X$, and a learning algorithm $\A$ which takes training data  $\trainvec \in X^n$ consisting of $n$ documents sampled independently from $D_L$, and outputs an LM, i.e., a distribution $D_{LM}:=\A(\trainvec) \in \Delta(X)$. For simplicity, we assume that there are only facts in the training data, and at most one per document, i.e., no training hallucinations. We focus on \textit{arbitrary} facts such as the above examples, whose truth is usually undetermined from the training set, rather than \textit{systematic} facts such as $572 < 120523$ predictable from a training set by learning the underlying rules governing correctness. There is no statistical reason that LMs should hallucinate on systematic facts. Additionally, mistakes on systematic facts may not be considered hallucinations at all---they are often categorized as reasoning or arithmetic errors. 

We assume that each document $x \in X$ contains at most one \textit{factoid} $f(x) \in Y$, where factoids are arbitrary pieces of information which are each either true (facts) or false (hallucinations) and whose truth is statistically hard to determine from the training data. We also simplify matters by considering \textit{unconditional generation} \citep[e.g.,][]{tan_progressive_2021} in which the LM is sampled to generate text without any prompt (equivalently, the empty-string prefix). Again, compared to our simplified setting, hallucination may be even more likely in the more realistic case where the LM is prompted with contexts that come from a different distribution than the training data.


\paragraph{Results.} 
Consider $n$ i.i.d.\ samples from an unknown distribution $p$ over a large number of arbitrary factoids, such as the 5W example and references. The \textit{missing mass}, or in our case \textit{missing facts} $p(U)$, is the fraction of future samples from this fact distribution $p$ that were not observed in the $n$ training samples, where $U$ is the subset of facts that were unobserved in training data. The \textit{Good-Turing} estimate of the missing mass \citep{good_population_1953} is the fraction of samples (in our case facts) that appear exactly once in the training data. In our context, we call this the \textit{MonoFacts} estimator,
$$\widehat{MF} := \frac{\text{Number of facts appearing exactly once in training data}}{n}.$$
The Good-Turing estimator was shown to be within $|p(U) - \widehat{MF}| = \tilde{O}(\sqrt{1/n})$ of the missing mass, with high probability \citep{mcallester_convergence_2000, mcallester_concentration_2003}, for any distribution $p$. 

If the correctness of arbitrary factoids not contained in the training cannot be determined, the missing fact rate can provide a lower-bound on the rate of hallucination. This in turn gives us a lower-bound close to $\widehat{MF}$. In particular, under a ``regularity'' assumption on the factoid distribution, our simplest bound (\cref{cor:1}) implies that, for \textit{any} algorithm, with probability $\ge 99\%$ over training sets:
$$\text{Hallucination rate} \ge \widehat{MF} - \text{Miscalibration} - \frac{300|\text{Facts}|}{|\text{Possible hallucinations}|}-\frac{7}{\sqrt{n}},$$
where the ``Hallucination rate'' is the rate at which the LM generates hallucinations, the next term is ``monofact'' estimator of the  missing facts. The subsequent term is a ``miscalibration rate'' which quantifies how close to calibration the distribution is. We also have a term involving the ratio of the number of arbitrary facts to similar pieces of information that are false, which is exponentially small for many types of information. The final $6/\sqrt{n}$ term is small for the large training set sizes $n$ used in today's LM. 
The regularity assumption means that all unobserved factoids have, on average, equal probability of being true. More generally, the bound holds with probability $\ge 1-\delta$ where the constant 60 can be replaced by a term which is inversely proportional to $\delta$ and proportional to a regularity term on the distribution of factoids. The regularity term measures the ratio of the most likely factoid (that was not observed in training data) to the average unobserved factoid probability. This constant is 1 for symmetric distributions and other types of simple distributions. We relax it to consider bounded regularity, which permits  there to be certain negative correlations such as the fact that a person does not eat 1000 lunches in 1000 different places on the same day and allows for some factoids to have conditional probability 0, but it prohibits unobserved factoids from having very large probability.


\paragraph{Interpretation.}
Our lower-bounds have the following interpretation. First, one should identify a large set of factoids: arbitrary, plausible, regular factoids. These could be posts about 5W and plausible research article citations. Intuitively, there are exponentially more plausible factoids that are incorrect (which we call hallucinations), than those that are facts. One can then consider what fraction of such factoids might appear exactly once in the training data. In the case of 5W, one could imagine that half of the posts occur exactly once. This would suggest that a calibrated-factoid model would hallucinate on about half of its generations on 5W factoids. On the other hand, one could imagine that there are many many fewer than $n$ articles and since the goal of publication is advertising, each reference might be expected to appear multiple times in the training data\footnote{Even papers that are never cited often appear on an authors' CVs and institutional web pages and are cross-posted and listed in multiple publication indexes since the authors and institutions often wish to publicize their output and indices such as Google/Semantic Scholar may compete for completeness.} (i.e., have probability $\gg 1/n$) except for perhaps very recent publications (e.g., within the last day before they other references begin to appear). This would suggest that the missing mass for articles is low and that there is no inherent statistical necessity to hallucinate reference titles. There may be other causes of such hallucinations such as limited model capacity (even if the number of LM parameters is much greater than the number of articles, these parameters must encode many types of information beyond article titles). This also justifies the mitigation of consulting a fact database at generation time, even if that fact database is constructed solely from training data \citep{borgeaud_improving_2022}. \cref{cor:types} gives a simple generalization of our analysis to multiple \textit{types} of facts.

Conversely, one may wonder what if any facts appear only once in a large training corpus that might encompass the entire web. First, it is worth noting that significant efforts are made, in constructing LM corpora, to remove duplicates \citep[see, e.g.][]{shen_slimpajama-dc_2023}. And in other contexts, such as Zipfian and other power-law distributions, it is often observed that a constant fraction of entities appear exactly once, even as the datasets scale. While most facts that come to mind may be well-known, many public meeting notes and other posts may contain unremarkable facts that are not mentioned in other places. 

Despite this tension between factuality and predictive accuracy, the parameters of both types of LMs are typically trained or ``pretrained'' to maximize likelihood on a corpus, or equivalently to minimize the ``KL divergence'', a strong statistical discrepancy metric between the LM and the distribution of data on which it was trained.

\begin{figure}[t]
    \centering
    \makebox[0pt]{
    \includegraphics[width=0.49\linewidth]{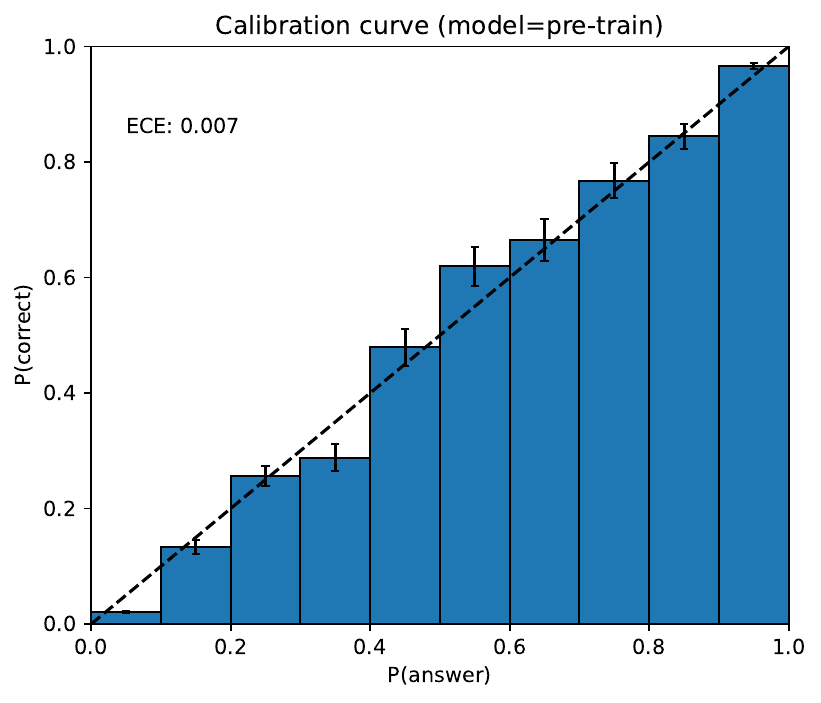}\hspace{0.02\linewidth}
    \includegraphics[width=0.49\linewidth]{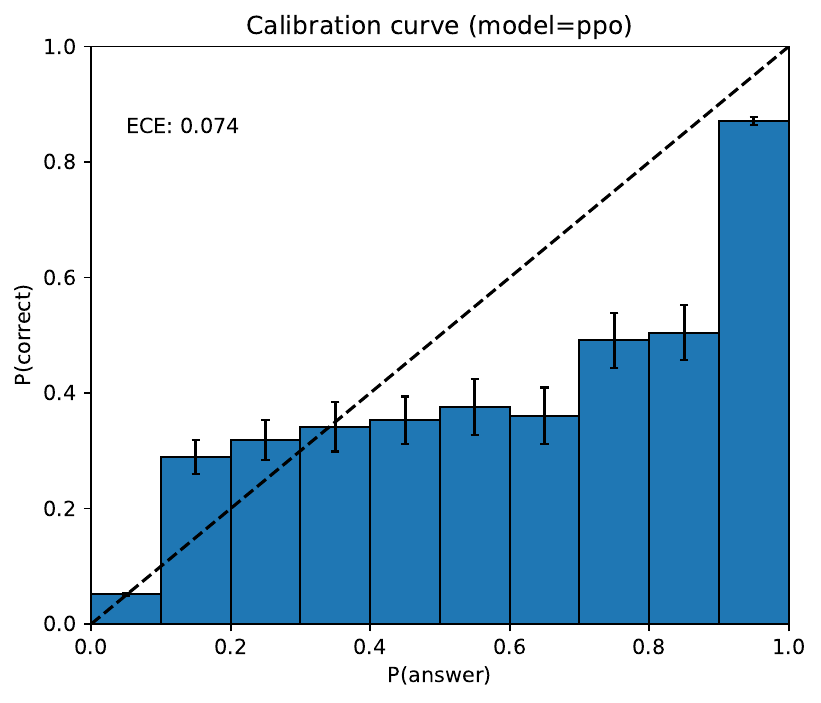}
    }
    \caption{GPT-4 calibration curves before (left) and after (right) reinforcement learning \citep[][Figure 8, reprinted with permission]{openai_gpt-4_2023}. As suggested by our model,
    post-training reduces hallucination rates at the cost of increasing calibration error. Note that calibration here is on a multiple-choice test rather than generative calibration which we study.}
    \label{fig:gpt4}
\end{figure}

\paragraph{Organization.}  Following related work discussed in \cref{sec:related}, \cref{sec:prelim} gives mathematical preliminaries including our definition of (mis)calibration. \cref{sec:model} defines our factoid model and \cref{sec:guarantees} states the main results. Sections \cref{sec:analysis} proves the main lemma. \cref{sec:upper} provides a simple upper bound showing that one cannot guarantee a hallucination rate greater than the monofact estimate for nearly calibrated estimators. 
\cref{sec:conclusions} concludes and discusses limitations and future work.  \cref{sec:good} in the appendix presents bounds on Good-Turing estimators from prior work. \cref{sec:alternatives} discusses extensions to alternative models such as calibration using bins of equal width, those based on raw accuracy (cross-entropy), or those using using prompts.  \cref{sec:corproofs} gives additional proofs.

\section{Related work}\label{sec:related}
As discussed in the introduction, the concept of calibration was introduced by \citet{dawid_well-calibrated_1982} and has been extensively studied in statistics and machine learning and even specifically for LMs \citep{braverman_calibration_2020,zhao_calibrate_2021,jiang_how_2021} and other related fields of deep learning and generative AI \citep{rohrbach_object_2018, maynez_faithfulness_2020}. \citet{blasiok_loss_2023} argue that calibration happens naturally as a byproduct of minimizing log-loss for deep neural networks, though their results are for a different architecture and different notion of calibration.

Unfortunately, there is not clear agreement on what counts as a hallucination. This is why we consider an idealized model in which there are clear-cut facts, where statements that violate these facts would generally be categorized as hallucinations by most definitions. 

\paragraph{Open- vs.~closed-domain hallucinations.}
Interestingly, many studies focus on hallucination with respect to a specific source document that is given to an LM, such as in translation \citep{xu_understanding_2023} or summarization \citep{maynez_faithfulness_2020}. There, LMs are also found to fabricate facts not present in the source document even when instructed to use only information present in the source. This is referred to as closed-domain hallucination in contrast to our open-domain setting, where the LM generates hallucinations which are not factually grounded in its training data. There is no clear statistical argument for why closed-domain hallucinations must occur, since if one can verify such hallucinations from the source text and generation one can avoid them by filtering out generations with such mistakes. Consistent with this, \citep{openai_gpt-4_2023} reports a greater reduction in closed-domain hallucinations over open-domain ones. 

\paragraph{Honesty vs.~factuality.}
\citet{evans_truthful_2021} points out that there is a difference between \textit{factuality} and \textit{truthfulness} (i.e., honesty). An LM which states something that disagrees with its training data may be said to \textit{lie}, whereas \textit{factuality} may be much harder to determine for a variety of reasons, including the possibility that what is considered factual may change as scientists make new discoveries and rebuke old theories. Nonetheless, in worlds where there is an absolute notion of facts, and the training data only contain facts, then any falsehood is also untruthful. Thus in ideal worlds where training data is perfectly consistent with an internally consistent ground-truth notion of facts, our bounds on non-factuality directly imply bounds on the rate at which LMs must generate untruthful information or ``lie.''

\paragraph{Hypotheses for why LMs hallucinate.}
There have been many explanations of why LMs hallucinate. The primary reason proposed for hallucinations is inadequacies in training data. This can be broken into two issues. First are falsehoods contained in the training data \citep{ji_survey_2023, lin_truthfulqa_2022, dziri_origin_2022, filippova_controlled_2020}, which \citet{lin_truthfulqa_2022} refer to as ``imitative falsehoods.'' Second is the temporal nature of data, i.e., the fact that training data is no longer relevant and is missing current events \citep{aksitov_characterizing_2023,vu_freshllms_2023}. While both of these are certainly factors in hallucination, our work shows that it is not the only cause of LM hallucinations. 

An additional reason given for why LMs may hallucinate is the fact that they are trained to produce tokens one at a time may lead to hallucination \citep{zhang_how_2023} because the LM may generate a plausible-sounding sequence of tokens which is impossible to complete factually. While this may be true for transformers due to computational limitations, this is not simply a statistical byproduct of their being trained on the next-token prediction task. Since document log-likelihood is simply the sum of next-token log-likelihood, the two objectives are identical and thus from a statistical perspective this is not a factor. Mathematically, any probability distribution over documents can be represented as a conditional next-token probability distribution.

Another line of work that sheds light on the nature of hallucinations shows that LMs \textit{know when they're hallucinating} \citep{kadavath_language_2022,azaria_internal_2023,agrawal_language_2023, manakul_selfcheckgpt_2023}. Various techniques may be used to identify LM hallucinations purely from the LM itself, either the internal weights, its token probabilities, or by querying it with additional questions. This is consistent with our work in the sense that one would expect even an ideal ``super-inelligent'' model should hallucinate if its goal is predictive accuracy.

A bevy of additional reasons have been proposed and studied for why LMs may hallucinate. These fall under headings such as duplicates in training data, biases, architecture, overconfidence and various types of knowledge failures, among others. A complete listing of these is beyond this scope of this work. For further details, see the recent surveys of
\citet{huang_survey_2023,zhang_sirens_2023,ji_survey_2023}.

\section{Mathematical Preliminaries}\label{sec:prelim}

We first define preliminary notation and concepts, including statistical distance and our notion of generative calibration.

\subsection{Standard definitions and notation}
Let $\Delta(S)$ denote the set of probability distributions over $S$.  For distribution $\D \in \Delta(S)$ and $R \subseteq S$, let $\D(R) := \sum_{x \in R} \D(x)$. 
The \textit{Total Variation distance} (TV) (also called statistical distance) between distributions $\D, \D' \in \Delta(S)$ has multiple equivalent definitions (for finite or countably infinite sets $S$):
\begin{equation}\label{eq:tv}
\|\D-\D'\|_{\text{TV}} := \max_{R \subseteq S} |\D(R)-\D'(R)|=\frac{1}{2}\sum_{x \in S} |\D(x)-\D'(x)|=\sum_{x \in S} \left(\D(x)-\D'(x)\right)_+,\end{equation}
where $(z)_+=\max(0, z)$ for $z \in \reals$. It satisfies $\|\D-\D'\|_{\text{TV}}=\|\D'-\D\|_{\text{TV}} \in [0,1]$. The \textit{support} of distribution $\D$ is $\supp(D)=\{x \in S\mid \D(x)>0\},$ as long as $S$ is finite or countably infinite. Let the set of \textit{partitions} of set $S$ be denoted by $\P(S) := \{ \Pi\subseteq 2^S \mid \forall x \in S ~|\{B \in \Pi \mid x \in B\}|=1\}$. For a function $f:X \rightarrow Y$ and set $S \subseteq X$, let $f(S) := \{f(x) \mid x \in S\}$. For a distribution $\D \in \Delta(X)$, let $f \circ \D$ denote the induced distribution of $f(x)$ for $x \sim \D$, i.e., $f(y) := \sum_{x: f(x)=y} \D(x)$. Finally, let $\D^{\times n}$ denote the distribution over $n$ independent samples from $\D$.

\subsection{Generative (mis)calibration}\label{sec:calibration}

This section defines a notion of \textit{miscalibration} $\mc{b}{g}{p} \in [0,1]$ for a generative model $g$ that measures how accurate its own probabilities are with respect to future examples drawn from a given pretraining distribution $p$, using $b \ge 1$ bins. It is a natural extension of existing notions of calibration to generative models, and can be skipped on first read for those who want to get straight to the model. \cref{sec:alternatives} discusses the relationship between this and existing notions of calibrated classifiers. The definitions in this section apply to any distributions $p, g \in \Delta(Y)$ for any finite set $Y$. In other words, there is only assumed to be a generative distribution $g$ over information $y \in Y$, and calibration is with respect to a ``true'' distribution $p$. Finiteness of $Y$ is only assumed at this point to avoid measure-theoretic technicalities. We first define a calibrated distribution $g$ as any \textit{coarsening} of $p$.

\begin{definition}[Coarsening and calibration]\label{def:coarsening}
    For partition $\Pi \in \P(Y)$ and $\D \in \Delta(Y)$, $\D^\Pi \in \Delta(Y)$ is the $\Pi$-\emph{coarsening of} $\D$ if,
    $$\forall B \in \Pi~\forall y \in B~~~~\D^\Pi(y)=\frac{\D(B)}{|B|}.$$    
    Distribution $g$ is said to be \emph{calibrated} to $p$ if $g=\coarsen{p}{\Pi}$ for some partition $\Pi$.
\end{definition}

Clearly $g=p$ is calibrated to $p$, and so is the uniform distribution $g(y)=u(y):=1/|Y|$. To define \textit{miscalibration}, 
let $B^g_z := \{y \in Y \mid g(y) = z\}$ and omit $g$, writing $B_z=B^g_z$ when clear from context. It is also clear that $g$ is calibrated to $p$ iff $g=\coarsen{p}{\B(g)}$ for partition $\B(g):=\{B_z \mid z \in [0,1]\}$ and thus a natural definition of miscalibration (which is 0 iff $g$ is calibrated to $p$) is: 
\begin{equation}\label{eq:mcinfty}
\mc{\infty}{p}{g}:=\bigl\|\coarsen{p}{\B(g)}-g\bigr\|_{\text{TV}} = \frac{1}{2}\sum_{B \in \B(g)}\sum_{y \in B}\left|\frac{p(B)}{B}-g(y)\right|.\end{equation}
The $\infty$ in the above notation refers to the fact that there is no limit on the number of bins. 
This also explains why it is called ``calibration'': the average probability over each bin that shares a common value of $g(y)$ is $g(y)$. This can be written as,
\begin{equation}\label{eq:cal}
\forall z\in [0,1]~~\E_{y\sim g}\bigl[p(y) \mid g(y)=z\bigr]=z.
\end{equation}

We next generalize $\B(g)$ to intervals. For $I \subseteq [0,1]$, define: 
$$B_I=B^g_I := \{y \in Y \mid g(y) \in I\}.$$
The definition below uses $b \ge 1$ adaptively sized bins which each have roughly equal probability mass in terms of generations $y \sim g$. \cref{sec:calvar} gives an alternate definition using bins of equal width in terms of log-probability.
\begin{definition}[Miscalibration]\label{def:mc}
Let $b \in \nats$ and define the adaptive partition,
$$\V_b(g):=\bigl\{B_{[0, t_1]}, B_{(t_1, t_2]}, \ldots, B_{(t_{b-1}, t_b]}\bigr\} \text{ where }t_i=\sup\set{z \in [0,1]}{\sum_{y:g(y) \le z} g(y) \le i/b}.$$
The \emph{miscalibration} of $g$ with respect to $p$ is $\mc{b}{p}{g}:=\TV{\coarsen{p}{\V_b(g)}}{g}$.
\end{definition}
Adaptive binning has been used in supervised classification \citep[e.g.,][]{nixon_measuring_2020}. Note that $\mc{1}{p}{g}=\TV{g}{u}$ is the total variation to the uniform distribution, which shows that $\mc{b}{p}{g}$ is not monotonic in $b$ because $b=1$ is the minimum for $g=u$ (regardless of $p$) while $b=\infty$ minimizes $\mc{b}{p}{p}=0$ for $g=p$. Also, $\mc{b}{p}{g}=\mc{\infty}{p}{g}$ for $b$ is large enough that $1/b\le \min_{y\in\supp(g)} g(y)$. Finally note that necessarily $t_b=1$ in the above definition.

\paragraph{Advantages and limitations.}
An advantage of semantic-level calibration is that it naturally models the exponentially many ways there are to describe any given fact, unlike token-level calibration models. This is also a disadvantage, however, because it means that it may be intractable to measure calibration rates, and thus experimental validation may be easier in synthetic settings where facts have canonical descriptions. One nice property of the above adaptive binning is that $\mc{b}{p}{g}=0$ iff $g$ is calibrated to $p$, regardless of $b$, whereas other definitions give 0 miscalibration whenever there is a single bin. Nonetheless, \cref{sec:calvar} shows how our analysis applies to other binning strategies.


\section{The model and guarantees}\label{sec:model}
Our notation is summarized in \cref{tab:common}. Let $\Sigma$ denote the set of tokens and $\Sigma^*$ denote the set of finite token sequences, i.e., strings. Let $X \subseteq \Sigma^*$ denote the set of documents (this could be finite, e.g., if a length restriction is imposed or countably infinite if $X=\Sigma^*$). 

The world is assumed to contain randomness, which is modeled by a distribution $D_\text{world} \in \Delta(\Delta(X))$ over language (document) distributions $D_L \sim D_\text{world}$. (More generally, a full world model would contain lot of other information but language distributions suffices for our purposes.) The training data consists of $n$ documents $\trainvec \sim D_L^{\times n}$ sampled independently from this distribution. It will be convenient to denote by $D_\text{train} \in \Delta(X^n)$ the distribution over training data, where the probability of any training document is:
$$D_\text{train}(\trainvec) := \E_{D_L \sim D_\text{world}}\left[\prod_{i=1}^n D_L\bigl(\train^{(i)}\bigr)\right].$$
This model requires a static world that does not change. Of course, real-world distributions are non-stationary and dynamic, and real-world training data contains duplicates and is not i.i.d. However, our lower bounds imply hallucination even in this ideal static setting.

\subsection{Factoid assumptions}
We assume there is a large but finite set $Y$ of ``factoids'' by which we mean \textit{arbitrary plausible} pieces of information, each of which could be true (facts) or false (hallucinations), as determined by some world distribution. Their arbitrary nature means that, given the facts observed in training data, one cannot infer any likely facts among the unobserved factoids. Of course, many real-world facts are systematic, not arbitrary. For instance, mathematical inequalities such as $17 < 252395$ are systematic and should thus not be included in $Y$. Note that $Y$ is not intended to capture all world facts but rather a large set of arbitrary ones. It could contain the 5W factoids (except for people who eat the same lunch every day in the same location, as their eating behaviors are too systematic). There is no statistical reason an LM must hallucinate on systematic facts.

We will make a few assumptions about factoids. 
\begin{enumerate}
\item {\bf One-per-doc}: First, we assume that there is at most one factoid per document by defining a surjective function $f: X \twoheadrightarrow Y$ which extracts a single 
factoid with each document, where $f(x)=\bot$ represents the empty fact (to allow for documents with no facts) and assume $\bot \in Y$. This makes the notation simpler as one can define the induced factoid distribution $p=f \circ D_L \in \Delta(Y)$ defined by $p(y) := \sum_{x: f(x)=y} D_L(x)$. Similarly, we can take $g = f \circ D_{LM}$ to be the induced distribution over generated factoids, where $D_{LM} \in \Delta(X)$ is the distribution over documents generated by the LM. The surjective requirement simply means that every factoid is describable by some document $x \in X$, and if this didn't hold one could always shrink $Y$. Our model does permit many documents to encode the same factoid, since there are typically many ways to describe any given piece of information. Again, this assumption may be generalized to a model where documents contain sets of factoids with further notation, but we show that calibrated LMs hallucinate even in a simple idealized setting with one factoid per document.

\item {\bf Good-training-data}: Second, we assume that the set of facts $F:=\supp(p) \cup\{\bot\}$ where $\supp(p)=\{y \in Y \mid p(y) > 0\}=f(\supp(D_L))$, i.e., the training data consists solely of facts and every fact has non-zero probability of being described under $D_L$. Both of these assumptions can be removed at the cost of additional notation without changing the results in the slightest---the world distribution would need to determine an arbitrary $F \subseteq Y$ and $D_L \in \Delta(Y)$. Keeping in the spirit of the ideal training data model, we choose to simplify notation and take $F:=\supp(p) \cup \{\bot\}$. The set of hallucinations is $H:=Y \setminus F$, i.e., every non-fact is defined to be a hallucination. 

\item {\bf More-false-than-true}: Third, \cref{ass:valid} below requires that there are many more falsehoods in $Y$ than facts, which makes sense for many types of information. For example, consider those factoids which describe a paper citation including the paper title, authors, and further details. There are vastly more plausible citations than existing ones. In this case, one may choose to include in $Y$ a somewhat smaller sparse set, i.e., not include author middle names or other minor variations in $Y$ which would make a reference in the ``grey area'' between fact and hallucination. Similarly, for 5W factoids, there are many more combinations of people who did not eat a given food on at a given time than people who did. 

\begin{assumption}[$\sparsity$-Sparse facts]\label{ass:valid}
There are many fewer facts than hallucinations. Specifically, there exists $\sparsity \in \reals$ such that, with probability 1 over $D_L\sim D_\mathrm{world}$: 
    $$|F| \le e^{-\sparsity}|H|.$$
Recall that $F:=f(\supp(D_L)) \cup \{\bot\}$ and $H=Y \setminus F$.
\end{assumption}
We write sparsity as an exponential to reflect the general exponentially nature of natural language facts.

\item {\bf (Semi)Regularity}: Finally, and most importantly, we assume that no single factoid is very likely. Specifically, perfect regularity requires that after observing the training data, all unobserved factoids are equally likely to appear as facts in the language distribution, and we also provide a relaxed $r$-regular notion, which we refer to as a ``semi-regularity'' assumption.
\begin{definition}[Regular facts]\label{def:balfacts}
$D_\mathrm{world}$ has \emph{regular facts} if for all $\trainvec \in \supp(D_\mathrm{train})$:
$$
\forall y, y' \in U~~~\Pr[y \in F \mid \trainvec]=\Pr[y' \in F\mid \trainvec].$$
For $r \ge 1$, $D_\mathrm{world}$ has $r$-\emph{regular-facts} if for all $\trainvec \in \supp(D_\mathrm{train})$,
$$\forall y \in U~~~\Pr[y \in F \mid \trainvec] \le  \frac{r}{|U|}\E\bigl[|F \cap U| \; \bigm| \;\trainvec\bigr].$$
\end{definition}
Having regular facts will suffice to prove the lower bound \cref{cor:balfact}, but stronger lower bounds will be possible if we also have regular probabilities.
\begin{definition}[Regular probabilities]\label{def:balprobs}
$D_\mathrm{world}$ has \emph{regular 
 probabilities} if for all $\trainvec \in \supp(D_\mathrm{train})$:
$$
\forall y, y' \in U~~~\E[p(y) \mid \trainvec]=\E[p(y') \mid \trainvec].
$$
For $r \ge 1$, $D_\mathrm{world}$ has $r$-\emph{regular-probabilities} if for all $\trainvec \in \supp(D_\mathrm{train})$,
$$\forall y \in U~~~\E[p(y) \mid \trainvec] \le  \frac{r}{|U|}\E[p(U) \mid \trainvec].$$
\end{definition}
Finally, we say $D_\mathrm{world}$ is \emph{regular} if it has regular facts and regular probabilities. It is not difficult to see that a distribution is regular iff it has 1-regular-facts and 1-regular-probabilities. We also note that our regularity assumptions could be modified to only holds with high-probability over $\trainvec$ and not for all $\trainvec \in \supp(D_\mathrm{train})$. 
\end{enumerate}

\bigskip

\begin{table}[t]
    \centering
    \begin{tabular}{rl}
        Symbol & Meaning\\\toprule
        $X \subseteq \Sigma^*$ & The set of documents (strings)\\
        $Y$ & Set of \textit{factoids}, arbitrary plausible pieces of information\\
        $\bot \in Y$ & Special ''empty string'' fact representing\\
        $f: X \twoheadrightarrow Y$ & Each document $x$ contains a single factoid $f(x)$, and $f(X)=Y$\\
        $D_\text{world} \in \Delta(\Delta(X))$ & Distribution over document distributions\\
        $D_L \in \Delta(X)$ & Language distribution over documents $D_L \sim D_\text{world}$\\
        $\trainvec\in X^n$ & Training data (i.i.d.) $\trainvec=\bigl\langle \train^{(1)},\train^{(2)},\ldots, \train^{(n)}\bigr\rangle\sim \D_L^{\times n}$\\
        $\A: X^n \rightarrow \Delta(X)$ & Algorithm that learns an LM from training data\\\midrule
        $p\in \Delta(Y)$ & Distribution over factoids $f(x)$ for documents $x\sim D_L$, i.e., $p:=f \circ D_L$\\
        $F \subseteq Y$ & Facts $F:=\supp(p) \cup \{\bot\}$, non-hallucinations\\
        $H\subseteq Y$ & Hallucinations $H:=Y \setminus F$\\        
        $D_\text{train} \in \Delta(X^n)$ & Distribution over $\trainvec \sim \D_L^{\times n}$ induced by $D_L\sim D_\text{world}$.\\
        $O \subseteq Y$ & Set of observed factoids $O:=\bigl\{f\bigl(\train^{(i)}\bigr) \mid i=1,2,\ldots, n \bigr\} \cup \{\bot\}\subseteq F$\\
        $U \subseteq Y$ & Unobserved factoids $U := Y \setminus O \supseteq H$\\        
        $\nu$ & Posterior over $p$ given training data $\trainvec$\\         
        $D_{LM} \in \Delta(X)$ & Distribution over documents generated by the LM, $D_{LM}:=\A(\trainvec)$\\
        $g\in \Delta(Y)$ & Distribution over factoids $f(x)$ for documents $x \sim D_{LM}$, i.e., $g:=f \circ D_{LM}$\\
        \bottomrule
    \end{tabular}
    \caption{Summary of notation. Symbols below the line are all derived quantities in terms of symbols above the line.  \label{tab:common}}
\end{table}

We illustrate a simple family of regular world distributions which satisfy our assumptions, followed by one which is only $r$-regular, does not have independence, and has anti-correlations between facts. 

\paragraph{Regular example: Permuted power-law world.} Suppose $X=Y$ and $f$ is the identity.  The world distribution $D_\text{world}$ first picks $F \subseteq Y$ uniformly random over such that $|F| = N$ (where $N \le |Y|/(1+e^\sparsity)$ so that $|F|/|H| \le e^{-\sparsity}$ so facts are $\sparsity$-sparse) and then picks $p$ to be a power-law distribution supported on a random ordering on $F$. That is, it picks a random permutation $\sigma : \{1,2,\ldots, N\} \hookrightarrow F$ and defines $p$ so that $p(\sigma(i)) \propto i^{-k}$ for some constant $k \ge 0$. This includes the uniform distribution over $F$ ($k=0$) and Zipfian distributions ($k=1$) as special cases. 

\paragraph{Semi-regular example: W5 with negative correlations.} 
The set of factoids is the product of fixed sets of people, dates, foods, and locations. $D_\text{world}$ chooses the set of facts $F$ randomly by: for each person on each date, there is a single fact which consists of that person eating a random food at a random location, and $p$ is uniform over $F$. This creates \textit{anti-correlations} between factoids because the knowledge about what and where a person ate a specific meal rules out any possible alternatives for that meal. Nonetheless, it can be seen that $r$-regularity holds for $r \le n_\text{people} n_\text{dates}$.

The above model can be enriched in various ways by adding reasons and by modeling people's preferences, geographic constraints, and behaviors. However the regularity parameter $r$ will be prohibitively large if there are predictable eaters, and indeed LMs might hallucinate less if there are large numbers of predictable eaters because an LM learning algorithm may learn these patterns and hallucinate less often.

\section{Guarantees}\label{sec:guarantees}

Our results are stated in terms of missing facts, a term inspired by the missing mass in data drawn from a probability distribution \citep{good_population_1953}. The \textit{missing facts rate} is the fraction of facts (according to the pretraining fact distribution $p$) that were not observed in the training data. Specifically, it is defined to be $p(U)$ where $U$ is the set of unobserved facts in the training data. Formally, define the \textit{observed} and \textit{unobserved} factoids as,
\begin{equation}\label{eq:ou}
O=O_{\trainvec} := \set{f\bigl(x^{(i)}_\mathrm{train}\bigr)} {i=1,2,\ldots, n}\cup \{\bot\}\subseteq Y, ~~~U=U_{\trainvec}:=Y \setminus O_{\trainvec},
\end{equation}
respectively. The \textit{monofact estimate} of the missing fact rate is defined to be the fraction of facts that appear exactly once in the training data:
\begin{equation}\label{eq:mf}
    \widehat{MF} = \widehat{MF}_{\trainvec} := \frac{\bigl|\bigl\{y \in Y \setminus \{\bot\} \mid y =f\bigl(\train^{(i)}\bigr) \text{ for exactly one }i \in \{1,2,\ldots n\}\bigr\}\bigr|}{n}.
\end{equation}
Note that the facts in the training data are distributed according to the distribution $p$. \cref{sec:good} states classical results asserting that $|\widehat{MF}-p(U)|=\tilde{O}(\sqrt{1/n})$ with high probability over $n$ samples from any distribution $p$.

An \textit{algorithm} $\A: X^n \rightarrow \Delta(X)$ takes as input $n$ training documents and outputs a document distribution $D_{LM}=A(\trainvec)$, which determines $g=f \circ D_{LM}$, i.e., $f(x)$ for $x \sim D_{LM}$. We now state our first result, which relies on a regular $D_\text{world}$ defined in \cref{def:balfacts,def:balprobs} above.
\begin{corollary}\label{cor:1}
Fix any $\delta\in [0,1], b, n\in \nats, \sparsity \in \reals$ and any $\sparsity$-sparse regular $D_\mathrm{world}$. Then for \textit{any} algorithm $\A: X^n \rightarrow \Delta(X)$, with probability $\ge 1-\delta$ over $D_L\sim D_\mathrm{world}$ and $\trainvec \sim D_L^{\times n}$,
$$g(H) \ge \widehat{MF} -\mc{b}{p}{g} -\frac{3e^{-\sparsity}}{\delta}-\sqrt{\frac{6\ln(6/\delta)}{n}},$$
where $D_{LM}=A(\trainvec)$, $g(H)$ is the LM hallucination rate, and $\widehat{MF}$ is defined in \cref{eq:mf}.
\end{corollary}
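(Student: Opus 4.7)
The plan is to decompose the hallucination rate as $g(H)=g(U)-g(U\cap F)$ (valid because $O\subseteq F$ forces $H\subseteq U$), and to produce each additive error term from a separate sub-argument: $3e^{-\sparsity}/\delta$ from sparsity of facts, $\mc{b}{p}{g}$ from miscalibration, and $\sqrt{6\ln(6/\delta)/n}$ from Good--Turing, assembled by a union bound that splits $\delta$ into three pieces of $\delta/3$. The first piece controls $g(U\cap F)$: for any $y\in U$, regularity of facts gives $\Pr[y\in F\mid\trainvec]=\E[|F\cap U|\mid\trainvec]/|U|$, and sparsity with $H\subseteq U$ bounds this by $|F|/|U|\le e^{-\sparsity}|H|/|U|\le e^{-\sparsity}$. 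Hence
\[
\E\bigl[g(U\cap F)\bigm|\trainvec\bigr]=\sum_{y\in U}g(y)\,\Pr[y\in F\mid\trainvec]\le e^{-\sparsity}g(U)\le e^{-\sparsity},
\]
and Markov's inequality on the joint distribution of $\trainvec$ and $D_L$ yields $g(U\cap F)\le 3e^{-\sparsity}/\delta$ with probability $\ge 1-\delta/3$.

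The second sub-argument lower-bounds $g(U)$ in terms of $p(U)$ and accounts for the $\mc{b}{p}{g}$ term. The TV characterization of miscalibration immediately gives $g(U)\ge \coarsen{p}{\V_b(g)}(U)-\mc{b}{p}{g}$. To replace the coarsened mass by the true missing mass, I would use regularity of probabilities: all $y\in U$ share a common posterior mean $\mu=\E[p(U)\mid\trainvec]/|U|$, and a bin-wise expansion reduces the discrepancy to
\[
\E\bigl[\coarsen{p}{\V_b(g)}(U)-p(U)\bigm|\trainvec\bigr]=\sum_{B\in\V_b(g)}\frac{|U\cap B|}{|B|}\bigl(\E[p(O\cap B)\mid\trainvec]-\mu\,|O\cap B|\bigr),
\]
which is non-negative because an observed factoid has posterior mean at least $\mu$ (being sampled is positively correlated with a larger value of $p(y)$). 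This is what the ``main lemma'' of \cref{sec:analysis} is set up to deliver.

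Finally, since the factoids in $\trainvec$ are i.i.d.\ from $p$, the Good--Turing concentration cited in \cref{sec:good} gives $p(U)\ge \widehat{MF}-\sqrt{6\ln(6/\delta)/n}$ with probability $\ge 1-\delta/3$. Chaining the three bounds via a union bound produces
\[
g(H)\ge g(U)-g(U\cap F)\ge p(U)-\mc{b}{p}{g}-\tfrac{3e^{-\sparsity}}{\delta}\ge \widehat{MF}-\mc{b}{p}{g}-\tfrac{3e^{-\sparsity}}{\delta}-\sqrt{\tfrac{6\ln(6/\delta)}{n}}
\]
with probability $\ge 1-\delta$, as required. The main obstacle is the middle step: miscalibration only controls $g$ against the coarsened distribution $\coarsen{p}{\V_b(g)}$, not against $p$ itself, and the set $U$ is not in general a union of bins of $\V_b(g)$, so closing the gap requires regularity of probabilities to kick in bin-wise together with a further Markov-type argument that lifts the in-expectation inequality to a high-probability one, which is what forces the $1/\delta$ scaling visible in the $3e^{-\sparsity}/\delta$ term.
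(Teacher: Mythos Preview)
Your decomposition $g(H)=g(U)-g(U\cap F)$ and the bound $\E[g(U\cap F)\mid\trainvec]\le e^{-\sparsity}$ are correct and match the paper exactly. The gap is in the middle step, where you pass from $\coarsen{p}{\V_b(g)}(U)$ to $p(U)$.

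Your argument there relies on the claim that every observed factoid $y\in O$ has $\E[p(y)\mid\trainvec]\ge\mu$, i.e., that being sampled is positively correlated with a larger $p(y)$. This is \emph{not} implied by the regularity assumptions: \cref{def:balfacts,def:balprobs} constrain only the unobserved factoids $y\in U$ and say nothing about the posterior on $p(y)$ for $y\in O$. Even in settings where the claim happens to hold, an inequality $\E[\coarsen{p}{\Pi}(U)-p(U)\mid\trainvec]\ge 0$ in expectation does not by itself yield the high-probability bound you need; Markov applies to nonnegative random variables, so you would need a bound on $\E[(p(U)-\coarsen{p}{\Pi}(U))_+\mid\trainvec]$, not on the signed expectation.

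The paper's \cref{lem:meat} supplies exactly that: a purely algebraic bound
\[
\E_\nu\bigl[(p(U)-\coarsen{p}{\Pi}(U))_+\bigr]\le |O|\,\max_{y\in U}\E_\nu[p(y)],
\]
which under regular probabilities is at most $\tfrac{|O|}{|U|}\E_\nu[p(U)]\le \tfrac{|F|}{|H|}\le e^{-\sparsity}$. The paper then \emph{combines} this with the $g(U\cap F)$ term inside a single positive part (this is \cref{thm:main}) and applies Markov once with budget $2\delta/3$, giving $\tfrac{3}{2\delta}\cdot 2e^{-\sparsity}=\tfrac{3e^{-\sparsity}}{\delta}$; the remaining $\delta/3$ goes to Good--Turing. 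Your three-way split of $\delta$ into equal thirds would, even with the correct positive-part bound in hand, yield $6e^{-\sparsity}/\delta$ rather than $3e^{-\sparsity}/\delta$, since each of the two $e^{-\sparsity}$ terms would pick up its own factor of $3/\delta$.
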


Now, we can state a weaker guarantee for semi-regular facts alone.
\begin{corollary}\label{cor:balfact}
Fix any $\delta\in [0,1], b, n \in \nats, r, \sparsity \in \reals$ and any $\sparsity$-sparse $D_\mathrm{world}$ with $r$-regular-facts. Then for \textit{any} algorithm $\A: X^n \rightarrow \Delta(X)$, with probability $\ge 1-\delta$ over $D_L\sim D_\mathrm{world}$ and $\trainvec \sim D_L^{\times n}$,
$$g(H) \ge \widehat{MF} -\mc{b}{p}{g} -\frac{3r ne^{-\sparsity}}{\delta}-\sqrt{\frac{6\ln(6/\delta)}{n}}.$$
\end{corollary}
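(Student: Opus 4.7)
My plan combines three controls via a union bound, each failing with probability at most $\delta/3$. The good-training-data assumption forces $O\subseteq F$, so $H\subseteq U$ and the identity $g(H)=g(U)-g(F\cap U)$ holds; it therefore suffices to lower-bound $g(U)$ by $p(U)$ up to slack, upper-bound $g(F\cap U)$, and finally invoke Good--Turing to replace $p(U)$ by $\widehat{MF}$. Specifically, the McAllester--Schapire concentration in \cref{sec:good} yields $p(U)\ge\widehat{MF}-\sqrt{6\ln(6/\delta)/n}$ with probability $\ge 1-\delta/3$, and the miscalibration definition gives $g(U)\ge\coarsen{p}{\V_b(g)}(U)-\mc{b}{p}{g}$ deterministically.

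The substantive work is the coarsening-to-$p(U)$ comparison and the $g(F\cap U)$ bound, both driven by $r$-regular-facts together with $\sparsity$-sparsity. For any $y\in U$ and any $\trainvec$,
\[\Pr[y\in F\mid\trainvec]\le\frac{r}{|U|}\E\bigl[|F\cap U|\bigm|\trainvec\bigr]\le\frac{r|F|}{|U|}\le re^{-\sparsity},\]
where the last step uses $|U|\ge|H|\ge e^{\sparsity}|F|$. Combined with $p(y)\le\mathbf{1}[y\in F]$, this gives $\E[p(y)\mid\trainvec]\le re^{-\sparsity}$ and $\E[g(F\cap U)\mid\trainvec]\le re^{-\sparsity}g(U)\le re^{-\sparsity}$. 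Expanding the coarsening gap via $p(B)=p(O\cap B)+p(U\cap B)$,
\[p(U)-\coarsen{p}{\V_b(g)}(U)=\sum_B\frac{|O\cap B|}{|B|}p(U\cap B)-\sum_B\frac{|U\cap B|}{|B|}p(O\cap B),\]
and bounding the first (only potentially positive) sum via the combinatorial identity $\sum_B|O\cap B||U\cap B|/|B|\le|O|\le n+1$, I obtain $\E[\sum_B(|O\cap B|/|B|)p(U\cap B)\mid\trainvec]\le(n+1)re^{-\sparsity}$. Markov's inequality applied to $g(F\cap U)+[p(U)-\coarsen{p}{\V_b(g)}(U)]_+$ then yields a bound of order $nre^{-\sparsity}/\delta$ with probability $\ge 1-\delta/3$; chaining through the three inequalities gives $g(H)\ge\widehat{MF}-\mc{b}{p}{g}-3rne^{-\sparsity}/\delta-\sqrt{6\ln(6/\delta)/n}$ with probability $\ge 1-\delta$.

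The hard part is the coarsening-gap bound in the absence of regular probabilities. Under the stronger hypothesis of \cref{cor:1}, one can equate $\E[p(y)\mid\trainvec]=\E[p(U)\mid\trainvec]/|U|$ for $y\in U$ and show this gap essentially vanishes in expectation, recovering the tight $3e^{-\sparsity}/\delta$ regularity term. With only $r$-regular-facts, the pointwise estimate $\E[p(y)\mid\trainvec]\le re^{-\sparsity}$ is the best available, and the combinatorial factor $|O|\le n+1$ from aggregating $p(U\cap B)$ across bins is exactly what produces the extra $n$, making \cref{cor:balfact} strictly weaker than \cref{cor:1}.
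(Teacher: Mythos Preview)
Your proposal is correct and follows essentially the same approach as the paper: the decomposition $g(H)=g(U)-g(F\cap U)$, the bound $g(U)\ge\coarsen{p}{\Pi}(U)-\TV{\coarsen{p}{\Pi}}{g}$, the coarsening-gap estimate $\E_\nu[(p(U)-\coarsen{p}{\Pi}(U))_+]\le|O|\max_{y\in U}\E_\nu[p(y)]$, and the fallback $\E_\nu[p(y)]\le\Pr_\nu[y\in F]\le re^{-\sparsity}$ together with $|O|\le n$ are precisely the ingredients the paper packages as \cref{thm:main} and \cref{lem:meat} before invoking Markov and Good--Turing. The only cosmetic discrepancy is your $\delta$-split: the paper allots $2\delta/3$ to Markov and $\delta/3$ to Good--Turing (rather than $\delta/3$ each), which is what makes the constant $3$ in $3rne^{-\sparsity}/\delta$ come out exactly.
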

The above is meaningful when sparsity $s \gg \log n$ is larger than the log of the number of training data. Otherwise, following bound uses semi-regularity of facts and probabilities.
\begin{corollary}\label{cor:general}
Fix any $\delta\in [0,1], b,n \in \nats, r, \sparsity \in \reals$ and any $\sparsity$-sparse $D_\mathrm{world}$ with $r$-regular-facts and $r$-regular-probabilities. Then for \textit{any} algorithm $\A: X^n \rightarrow \Delta(X)$, with probability $\ge 1-\delta$ over $D_L\sim D_\mathrm{world}$ and $\trainvec \sim D_L^{\times n}$,
$$g(H) \ge \widehat{MF}-\mc{b}{p}{g} -\frac{3r e^{-\sparsity}}{\delta}-\sqrt{\frac{6\ln(6/\delta)}{n}}.$$
\end{corollary}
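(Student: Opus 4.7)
The plan is to derive \cref{cor:general} as a high-probability consequence of the main lemma in \cref{sec:analysis}, combined with the Good-Turing concentration bound of \cref{sec:good} and a single application of Markov's inequality. The three slack terms on the right-hand side correspond to three failure events, each of which should have probability at most $\delta/3$, and a union bound yields total failure probability at most $\delta$.

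Concretely, I would first invoke the main lemma in an expected-value form, which under the $r$-regular-facts and $r$-regular-probabilities hypotheses should assert something like $\E\bigl[\bigl(p(U) - g(H) - \mc{b}{p}{g}\bigr)_+\bigr] \le r\,e^{-\sparsity}$, the expectation being over both $D_L \sim D_\mathrm{world}$ and $\trainvec \sim D_L^{\times n}$. The reason this is the right quantity is that $g(H) = g(U) - g(F \cap U)$, and the total-variation definition of miscalibration gives $g(U) \ge \coarsen{p}{\V_b(g)}(U) - \mc{b}{p}{g}$, so $p(U) - g(H) - \mc{b}{p}{g} \le \bigl[p(U) - \coarsen{p}{\V_b(g)}(U)\bigr] + g(F \cap U)$. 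For the second summand, $r$-regular-facts gives $\Pr[y \in F \mid \trainvec] \le r\,\E[|F \cap U| \mid \trainvec]/|U| \le r\,e^{-\sparsity}$ via $|F \cap U|/|U| \le e^{-\sparsity}$ from sparsity, hence $\E[g(F \cap U) \mid \trainvec] \le g(U) \cdot r\,e^{-\sparsity} \le r\,e^{-\sparsity}$; for the first summand, $r$-regular-probabilities does the analogous job within each bin. Markov on this nonnegative quantity then yields $(p(U) - g(H) - \mc{b}{p}{g})_+ \le 3r\,e^{-\sparsity}/\delta$ with probability $\ge 1 - \delta/3$, and Good-Turing concentration from \cref{sec:good} gives $p(U) \ge \widehat{MF} - \sqrt{6\ln(6/\delta)/n}$ with probability $\ge 1 - \delta/3$. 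Combining the two by union bound produces the stated inequality.

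The main obstacle is controlling the coarsening discrepancy $\E\bigl[p(U) - \coarsen{p}{\V_b(g)}(U)\bigr]$. Expanding bin-by-bin one obtains $\sum_B \frac{1}{|B|}\bigl(|B \cap O|\,p(B \cap U) - |B \cap U|\,p(B \cap O)\bigr)$, and its conditional expectation given $\trainvec$ is governed by the bound $\E[p(B \cap U) \mid \trainvec] \le r\,|B \cap U|\,\E[p(U) \mid \trainvec]/|U|$, which is $r$-regular-probabilities summed over $y \in B \cap U$. Without this probabilities-regularity one has only the trivial $p(B \cap U) \le 1$, which forces the $\sum_B |B \cap O|/|B| \le |O| \le n+1$ factor that degrades the sparsity term in \cref{cor:balfact} to $3rne^{-\sparsity}/\delta$. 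The delicate point is that the bins $\V_b(g)$ and the partition $(O,U)$ are all random through $\trainvec$, but since both regularity assumptions are stated conditionally on $\trainvec$, the outer expectation over $\trainvec$ absorbs this cleanly.
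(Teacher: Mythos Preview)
Your approach matches the paper's exactly: apply \cref{thm:main} conditionally on $\trainvec$, bound its right-hand side via the regularity and sparsity hypotheses, then combine Markov's inequality with the Good-Turing bound of \cref{cor:simp} by a union bound. One small bookkeeping correction: the two contributions $\E_\nu[g(F\cap U)]$ and $\E_\nu\bigl[(p(U)-\coarsen{p}{\V_b(g)}(U))_+\bigr]$ are \emph{each} at most $re^{-\sparsity}$ (using $|O|\le |F|$, $|U|\ge |H|$, and $|F|\le e^{-\sparsity}|H|$), so the expectation in \cref{thm:main} is bounded by $2re^{-\sparsity}$, not $re^{-\sparsity}$; accordingly the paper allocates $2\delta/3$ to Markov and $\delta/3$ to Good-Turing (only two tail events---the miscalibration term is deterministic, not a tail event), which yields $\tfrac{3}{2\delta}\cdot 2re^{-\sparsity}=3re^{-\sparsity}/\delta$.
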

It is easy to see that \cref{cor:1} is a special case of this corollary for $r=1$.

\subsection{Different types of facts} 
Our analysis immediately generalizes to multiple distinct types of facts  (e.g., article references and social media posts). Suppose there are $k > 1$ sets of factoids, $Y_1, Y_2, \ldots, Y_k$ and functions $f_i: X \rightarrow Y_i$, with corresponding sets of facts and hallucinations $F_i \cup H_i = Y_i$, monofact estimates $\widehat{MF}_i \in [0,1]$ and miscalibration rates $\mc{i,b}{p}{g}$. One also would generalizes the notion of $\sparsity$-sparse to include the fact that, for each type of fact, $|F_i| \le e^{-\sparsity}|H_i|$ with probability 1 and similarly generalize regular facts to hold for each type of fact. Then \cref{cor:1} implies:
\begin{corollary}\label{cor:types}
Fix any $\delta\in [0,1], b, k, n\in \nats, \sparsity \in \reals$ and any $\sparsity$-sparse regular $D_\mathrm{world}$. Then for \textit{any} algorithm $\A: X^n \rightarrow \Delta(X)$, with probability $\ge 1-\delta$ over $D_L\sim D_\mathrm{world}$ and $\trainvec \sim D_L^{\times n}$,
$$g(H_i) \ge \widehat{MF}_i -\mc{i, b}{p}{g} -\frac{3ke^{-\sparsity}}{\delta}-\sqrt{\frac{6\ln(6k/\delta)}{n}} \text{ for }i=1,2,\ldots,k.$$
\end{corollary}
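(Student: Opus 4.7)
The plan is to reduce directly to \cref{cor:1} by applying it once per fact type with a shrunken failure parameter and then combining via a union bound. First I would fix an arbitrary type $i \in \{1,\ldots,k\}$ and view the problem as an instance of the single-type setting of \cref{cor:1}: the factoid set is $Y_i$, the factoid extraction function is $f_i$, the induced fact and hallucination sets are $F_i$ and $H_i$, and the corresponding monofact and miscalibration quantities are $\widehat{MF}_i$ and $\mc{i,b}{p}{g}$. Because the paper's generalized $\sparsity$-sparsity requires $|F_i| \le e^{-\sparsity}|H_i|$ for \emph{each} $i$ with probability $1$, and regularity is likewise assumed to hold per type, all hypotheses of \cref{cor:1} carry over to the $i$-th instance along the same random draw $(D_L,\trainvec)$.

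Next I would invoke \cref{cor:1} on the $i$-th instance with failure parameter $\delta/k$ in place of $\delta$, obtaining that with probability at least $1-\delta/k$ over $(D_L,\trainvec)$,
$$g(H_i) \ge \widehat{MF}_i - \mc{i,b}{p}{g} - \frac{3e^{-\sparsity}}{\delta/k} - \sqrt{\frac{6\ln(6/(\delta/k))}{n}}.$$
The arithmetic simplifications $3e^{-\sparsity}/(\delta/k) = 3ke^{-\sparsity}/\delta$ and $\ln(6/(\delta/k)) = \ln(6k/\delta)$ then match the target bound exactly.

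Finally I would take a union bound over $i=1,2,\ldots,k$: the probability that at least one of the $k$ inequalities fails is at most $k \cdot (\delta/k) = \delta$, so all $k$ bounds hold simultaneously with probability $\ge 1-\delta$ on the common sample path $(D_L,\trainvec)$, which is precisely the statement of \cref{cor:types}. There is essentially no technical obstacle here; the only subtlety worth pausing on is that the sparsity and regularity assumptions be interpreted per type, so that \cref{cor:1} may be invoked separately for each $i$ against the same random draw, and the paper's generalized definitions are evidently set up to ensure this.
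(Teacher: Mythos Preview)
Your proposal is correct and matches the paper's approach exactly: the paper states that the proof follows trivially from \cref{cor:1} using the union bound and $\delta/k$ failure probability for each type of fact, which is precisely what you do.
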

The proof follows trivially from \cref{cor:1}
using the union bound and $\delta/k$ failure probability for each type of fact.

\subsection{Analysis approach}
While our model supposes $D_L \sim D_\text{world}$ followed later by $\trainvec \sim \D_L^{\times n}$, for analysis purposes we imagine first selecting $\trainvec \sim D_\text{train}$ and then selecting $p \sim \nu_{\trainvec}$ where $\nu=\nu_{\trainvec}$ is defined to be the posterior distribution on $p$ given $\trainvec$. These two procedures result in identical joint distributions on $p, \trainvec$, but the latter is easier to analyze.  In particular, we show:
\begin{theorem}\label{thm:main}
For any $\nu\in \Delta(\Delta(Y))$, any $O\subseteq F \subseteq Y$, any $g \in \Delta(Y)$, and any partition $\Pi\in \P(Y)$,
$$\E_{p \sim \nu}\left[\left(p(U) - \bigl\|p^{\Pi}-g\bigr\|_{\mathrm{TV}} -g(H) \right)_+ \right] \le \max_{y \in U} \Pr_{p \sim \nu}[y \in F]+|O| \max_{y \in U}\E_\nu[p(y)],$$
where $H := Y \setminus F$ and $U:=Y\setminus O$.
\end{theorem}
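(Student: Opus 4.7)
The plan is to first derive a pointwise (in $p$) upper bound on the quantity $p(U) - \|p^\Pi - g\|_{\mathrm{TV}} - g(H)$, and only then take $\E_{p \sim \nu}$. Since $O \subseteq F$ implies $H \subseteq U$, the set $U$ partitions into $U \cap F$ and $H$, giving $g(H) = g(U) - g(U \cap F)$. The total variation distance dominates any subset-probability difference, so $\|p^\Pi - g\|_{\mathrm{TV}} \geq p^\Pi(U) - g(U)$, i.e., $g(U) \geq p^\Pi(U) - \|p^\Pi - g\|_{\mathrm{TV}}$. Substituting into the expression for $g(H)$ yields the pointwise inequality
$$p(U) - \|p^\Pi - g\|_{\mathrm{TV}} - g(H) \;\le\; \bigl(p(U) - p^\Pi(U)\bigr) + g(U \cap F).$$

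Taking positive parts and using $(X+Y)_+ \le X_+ + Y$ for $Y \ge 0$ splits the expectation into two pieces. The $g(U \cap F)$ piece is immediate: since $g$ does not depend on $p \sim \nu$,
$$\E_\nu\bigl[g(U \cap F)\bigr] \;=\; \sum_{y \in U} g(y)\,\Pr_\nu[y \in F] \;\le\; \max_{y \in U}\Pr_\nu[y \in F] \cdot g(U) \;\le\; \max_{y \in U}\Pr_\nu[y \in F],$$
which already produces the first summand on the right-hand side of the theorem.

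The main work is to bound $\E_\nu[(p(U) - p^\Pi(U))_+]$. Since $p$ and $p^\Pi$ are both probability distributions, $p(U) - p^\Pi(U) = p^\Pi(O) - p(O)$. Expanding block-by-block via $p(B) = p(B \cap O) + p(B \cap U)$ gives the signed identity
$$p^\Pi(O) - p(O) \;=\; \sum_{B \in \Pi} \left[ \frac{|B \cap O|}{|B|}\,p(B \cap U) \;-\; \frac{|B \cap U|}{|B|}\,p(B \cap O) \right].$$
The two partial sums are individually nonnegative, so $(p^\Pi(O) - p(O))_+$ is bounded by the positive sum alone. Taking $\E_\nu$, bounding each $\E_\nu[p(y)]$ for $y \in U$ by $\max_{y' \in U} \E_\nu[p(y')]$, and using $|B \cap U|/|B| \le 1$ together with $\sum_B |B \cap O| = |O|$, yields the required $|O|\max_{y \in U}\E_\nu[p(y)]$. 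The delicate step is this signed decomposition: the naive bound $(p^\Pi(O) - p(O))_+ \le p^\Pi(O)$ would give a maximum over all of $Y$ rather than only over $U$, and it is precisely the cancellation of the $p(B \cap O)$ contributions that confines the surviving expectation to the unobserved factoids, which is what makes the bound small under the regularity hypotheses invoked by the corollaries.
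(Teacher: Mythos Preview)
Your proposal is correct and follows essentially the same route as the paper: the identical pointwise decomposition into $(p(U)-p^\Pi(U))_+ + g(U\cap F)$, the same one-line bound on $\E_\nu[g(U\cap F)]$, and a block-by-block bound on $(p(U)-p^\Pi(U))_+$ that lands on the same intermediate quantity $\sum_B \tfrac{|B\cap O|}{|B|}\,p(B\cap U)$. The only cosmetic difference is that you reach this quantity via the exact signed identity for $p^\Pi(O)-p(O)$ and then drop the negative sum, whereas the paper's Lemma derives it by the inequality $p(B)\ge p(S\cap B)$ applied to $p(S)-q(S)$ with $S=U$; both then finish with $|B\cap U|/|B|\le 1$ (equivalently $\E_\nu[p(B\cap U)]/|B\cap U|\le\max_{y\in U}\E_\nu[p(y)]$) and $\sum_B |B\cap O|=|O|$.
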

The corollaries stated earlier follow directly from this theorem together with the definition of $\mc{b}{p}{g}:=\TV{\coarsen{p}{\V_b(g)}}{g}$, Markov's inequality for a non-negative random variable to show that the quantity in the expectation is small with high probability, which we combine with existing bounds on the Good-Turing estimator from \cref{sec:good} that show that $|\widehat{MF}-p(U)|=\tilde{O}(\sqrt{1/n})$. The quantities on the right hand side correspond to the arbitrary facts and arbitrary probability notions. The above theorem is enough to show the result for various binning strategies, such as fixed-width, which depend arbitrarily on $g$ (but not on $p$).

\section{Proof of Theorem \ref{thm:main}}\label{sec:analysis}

This section proves \cref{thm:main}.
\begin{lemma}\label{lem:meat}
Let $S \subseteq Y$ and let $p^\Pi$ be any coarsening of $p$. Then,
$$\E_\nu\left[\left(p(S)-p^\Pi(S)\right)_+\right]  \le |Y \setminus S|\cdot \max_{y \in S} \E_\nu[p(y)].$$
\end{lemma}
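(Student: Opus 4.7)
The plan is to expand $p(S) - p^\Pi(S)$ block-by-block with respect to the partition $\Pi$, discard a non-negative contribution so as to upper-bound the positive part, and then pass expectations inside and bound $\E_\nu[p(y)]$ uniformly by $\max_{y \in S}\E_\nu[p(y)]$ on $S$.

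First I would write, using $p^\Pi(y) = p(B)/|B|$ for each $y$ in block $B \in \Pi$,
\[
p^\Pi(S) = \sum_{B \in \Pi} p(B)\,\frac{|B \cap S|}{|B|}, \qquad p(S) = \sum_{B \in \Pi} p(B \cap S),
\]
and then split $p(B) = p(B \cap S) + p(B \setminus S)$ to obtain the exact block-decomposition
\[
p(S) - p^\Pi(S) = \sum_{B \in \Pi} \frac{1}{|B|}\bigl[p(B \cap S)\,|B \setminus S| - p(B \setminus S)\,|B \cap S|\bigr].
\]
Since $p(B \setminus S)\,|B \cap S|/|B| \ge 0$, dropping that term gives
\[
p(S) - p^\Pi(S) \le \sum_{B \in \Pi} \frac{|B \setminus S|}{|B|}\, p(B \cap S),
\]
and the right-hand side is non-negative, so the same inequality holds with $(\,\cdot\,)_+$ applied to the left side.

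Next I would take the $\nu$-expectation, exchange it with the finite sum, and bound
\[
\E_\nu[p(B \cap S)] = \sum_{y \in B \cap S} \E_\nu[p(y)] \le |B \cap S|\cdot \max_{y \in S}\E_\nu[p(y)].
\]
This produces the per-block factor $|B \cap S|\,|B \setminus S|/|B|$, which is at most $|B \setminus S|$ because $|B \cap S| \le |B|$. Summing over blocks and using that $\Pi$ partitions $Y$,
\[
\sum_{B \in \Pi} |B \setminus S| = |Y \setminus S|,
\]
yields the claimed bound.

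No step looks genuinely difficult: the algebraic identity for $p(S) - p^\Pi(S)$ is routine, and the main thing to get right is the order of two reductions, namely dropping the negative block-term \emph{before} taking positive parts (valid because $(x)_+$ is monotone and the resulting upper bound is already non-negative) and then using $|B \cap S|/|B| \le 1$ to collapse the block weights into $|Y \setminus S|$. The subtlest point to double-check is simply that the inequality $\E_\nu[(A-B)_+] \le \E_\nu[U]$ follows from $A - B \le U$ with $U \ge 0$, rather than requiring anything pointwise about $A$ and $B$ individually.
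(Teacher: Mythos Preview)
Your proposal is correct and follows essentially the same approach as the paper: both arrive at the intermediate bound $p(S)-p^\Pi(S)\le \sum_{B\in\Pi}\frac{|B\setminus S|}{|B|}\,p(B\cap S)$, then take expectations and use $\E_\nu[p(B\cap S)]\le |B\cap S|\max_{y\in S}\E_\nu[p(y)]$ together with $|B\cap S|/|B|\le 1$ (the paper phrases this last step as $1/|B|\le 1/|S\cap B|$, which is equivalent). Your exact block-decomposition before dropping the negative term is a slightly cleaner way to reach the same inequality the paper obtains via $p(B)\ge p(S\cap B)$.
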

\begin{proof}
Suppose $q=\coarsen{p}{\Pi}$ for some partition $\Pi\in\P(Y)$. Then,
\begin{align*}
p(S)-q(S) &=  \sum_{B \in \Pi} p(S \cap B) -q(S \cap B)\\
           &=  \sum_B p(S \cap B) - \frac{p(B)}{|B|}|S \cap B|\\
           &\le\sum_B p(S \cap B) - \frac{p(S \cap B)}{|B|}|S \cap B|\\
           &=  \sum_B p(S \cap B)\frac{|B| - |S \cap B|}{|B|}\\           
           &=  \sum_B p(S \cap B)\frac{|B\setminus S|}{|B|}\\           
           &\le\sum_B \frac{p(S \cap B)}{|S \cap B|}|B \setminus S|.
\end{align*}
Since $a\le b\Rightarrow (a)_+ \le (b)_+$ and this last quantity is non-negative,
\begin{align*}
\left(p(S)-q(S)\right)_+           &\le\sum_B \frac{p(S \cap B)}{|S \cap B|}|B \setminus S|\\
\E_\nu\left[\left(p(S)-q(S)\right)_+\right] &\le \sum_B \E_\nu\left[\frac{p(S \cap B)}{|S \cap B|}\right] \,|B \setminus S|\\
           &\le \sum_B |B \setminus S|\, \max_{y \in S} \E_\nu[p(y)] \\
           &=|Y \setminus S|\, \max_{y \in S} \E_\nu[p(y)].
\end{align*}
\end{proof}



We are no ready to prove \cref{thm:main}.
\begin{proof}[Proof of \cref{thm:main}]
Let $q=\coarsen{p}{\Pi}$. By the definition of TV, 
\begin{equation}\label{eq:foo}
g(U) \ge q(U) - \TV{q}{g}.
\end{equation}
Since $Y \setminus F = H \subseteq U$, we have $H=U \setminus (F \cap U)$ and,
\begin{align*}
    g(H) &= g(U) - g(F \cap U) \\
         &\ge q(U) - \TV{q}{g} - g(F \cap U) & \text{ by \cref{eq:foo}}\\
         &= p(U) - (p(U) - q(U)) - \TV{q}{g} - g(F \cap U)\\
         &= p(U) -\TV{q}{g} - \bigl(p(U) - q(U) + g(F \cap U)\bigr).
\end{align*}
Rearranging terms gives $p(U)- g(H)-\TV{q}{g} \le p(U) - q(U) + g(F \cap U)$. 
Applying $a \le b \Rightarrow (a)_+ \le (b)_+$, 
$$\left(p(U)- g(H)-\TV{q}{g} \right)_+ \le \left(p(U) - q(U) + g(F \cap U)\right)_+ \le \left(p(U) - q(U)\right)_+ + g(F \cap U).$$
Thus, it suffices to prove:
\begin{equation}\label{eq:ts}
\E_\nu\left[g(F \cap U) + \bigl(p(U)-q(U)\bigr)_+\right] \le \max_{y \in U} \Pr_\nu[y \in F] + |O| \max_{y \in U} \E_{\nu}[p(y)].
\end{equation}
To this end, linearity of expectation implies that,
\begin{equation}\label{eq:prg}
\E_\nu[g(F \cap U)] = \sum_{y \in U} g(y)\Pr_\nu[y \in F] \le \max_{y \in U} \Pr_\nu[y \in F].
\end{equation}

By \cref{lem:meat} (with $S=U$),
$$\E_\nu\left[\bigl(p(U)-q(U)\bigr)_+\right] \le |O| \max_{y \in U} \E_{\nu}[p(y)]
.$$
Combining this with \cref{eq:prg} gives \cref{eq:ts}, as required.
\end{proof}

\section{Upper bounds on hallucination rate}\label{sec:upper}

Could one prove a much better lower bound than $\widehat{MF}$? If not in general, are there better lower bounds under various assumptions on $D_\text{world}$? In this section, we argue that a significantly better lower-bound is not possible by showing that for any $D_\text{world}$, there is a algorithm that is calibrated and hallucinates at a rate near the missing facts rate, equivalent its Good-Turing estimate. The conceptually simple algorithm is neither efficient nor a good LM, but it suffices to show that a better lower-bound is not possible.

The algorithm memorizes the facts in the training data and generates random facts from the training data and random unseen factoids uniformly at random, but at different rates.
\begin{enumerate}
\item Inputs: $\trainvec \in X^n$.
\item Let $O, U=Y \setminus O$ be the sets of observed and unobserved factoids in the training data, respectively, as defined in \cref{eq:ou} and compute $\widehat{MF}$, the fraction of factoids that appear exactly once in the training data, as defined in \cref{eq:mf}. Let $g \in \Delta(Y)$ be defined as
$$g(y):=\begin{cases}
        \frac{\widehat{MF}}{|U|} & \text{if } y \in U,\\
        \frac{1-\widehat{MF}}{|O|} & \text{if } y \in O.\\                
    \end{cases}$$
\item For each factoid $y \in Y$, select a corresponding document $d(y) \in X$ such that $f(d(y))=y$. To be specific, one can take $d(y)$ to be the lexicographically first document in $\{x \in X \mid f(x) = y\}$.
\item Output $D_{LM} = d \circ g$, i.e., the distribution which samples $y \sim g$ and then outputs $d(y).$
\end{enumerate}
It is easy to see that, by design, $g = f \circ D_{LM}$.
\begin{lemma}
    For any $\delta,\lambda \in [0,1],D_\mathrm{world},n \ge 1$,
    $$\Pr_{\trainvec \sim \D_\mathrm{train}}\left[g(H) \le \widehat{MF} \mathrm{\ and\ }\mc{\infty}{p}{g} \le 3\sqrt{\frac{\ln(4/\delta)}{n}}\right] \ge 1-\delta,$$
    where $g = f \circ D_{LM}$ for the above algorithm, $g(H)$ is its hallucination rate, and $\mc{\infty}{p}{g}$ is defined in \cref{eq:mcinfty}.
\end{lemma}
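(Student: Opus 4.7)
The plan is to dispatch the two conjuncts separately: the first, $g(H) \le \widehat{MF}$, is deterministic and follows from the construction of $g$; the second is a direct reduction to a Good--Turing concentration bound from \cref{sec:good}.

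First, I would observe that $O \subseteq F$, since the training data consists only of facts by the ``good-training-data'' assumption. Therefore $H = Y \setminus F \subseteq Y \setminus O = U$. By the definition of $g$, every $y \in U$ receives weight $\widehat{MF}/|U|$, so
$$g(H) = |H| \cdot \frac{\widehat{MF}}{|U|} \le |U| \cdot \frac{\widehat{MF}}{|U|} = \widehat{MF}.$$
This holds with probability $1$ and makes no use of $\delta$.

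Next, I would compute $\mc{\infty}{p}{g}$ explicitly. Because $g$ takes only two values (one for $y \in U$, one for $y \in O$), the partition $\B(g)$ from the definition in \cref{eq:mcinfty} consists of (at most) the two cells $U$ and $O$. On each cell $B$, the coarsening $p^{\B(g)}$ assigns value $p(B)/|B|$, so its total $p^{\B(g)}(B) = p(B)$, while $g$'s total on $B$ is $g(B)$. Since $g$ is constant on each cell, the TV between $p^{\B(g)}$ and $g$ collapses to
$$\mc{\infty}{p}{g} = \tfrac{1}{2}\bigl(|p(U) - g(U)| + |p(O) - g(O)|\bigr) = |p(U) - g(U)| = \bigl|p(U) - \widehat{MF}\bigr|,$$
where the second equality uses $p(U) + p(O) = g(U) + g(O) = 1$, and the third uses $g(U) = \widehat{MF}$ by construction.

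It remains to show $|p(U) - \widehat{MF}| \le 3\sqrt{\ln(4/\delta)/n}$ with probability at least $1-\delta$ over $\trainvec$. This is exactly the concentration of the Good--Turing missing-mass estimator around the true missing mass, for samples drawn from $p$. I would cite the bound from \cref{sec:good} (McAllester--Schapire / McAllester--Ortiz style), which gives $|p(U) - \widehat{MF}| = \tilde O(1/\sqrt{n})$ with high probability for any distribution $p$. In more detail, one splits $|p(U) - \widehat{MF}| \le |p(U) - \E[p(U)]| + |\E[p(U)] - \E[\widehat{MF}]| + |\widehat{MF} - \E[\widehat{MF}]|$; the first and third are each bounded by $\sqrt{\ln(4/\delta)/(2n)}$ via McDiarmid (both are $1/n$-Lipschitz functions of the $n$ i.i.d.\ draws), and the middle bias term is at most $1/n$ by a standard calculation, giving the claimed constant $3$ after union-bounding. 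Combining this with the deterministic inequality $g(H) \le \widehat{MF}$ completes the proof.

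The main (and only) non-trivial step is the Good--Turing concentration, but since the paper has already catalogued those bounds in its appendix, the proof is essentially a bookkeeping exercise once one notices that $\B(g)$ has just two cells and the miscalibration reduces to $|p(U) - \widehat{MF}|$.
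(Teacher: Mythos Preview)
Your argument is correct and matches the paper's proof almost line-for-line: show $g(H)\le\widehat{MF}$ deterministically from $H\subseteq U$, collapse $\mc{\infty}{p}{g}$ to $|p(U)-\widehat{MF}|$ via the (at most) two-cell structure of $\B(g)$, and finish by citing \cref{cor:simp}.

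Two small corrections, neither fatal. First, your displayed \emph{equality} $\mc{\infty}{p}{g}=|p(U)-\widehat{MF}|$ fails in the degenerate one-bin case $\widehat{MF}/|U|=(1-\widehat{MF})/|O|$, where the miscalibration is $0$ but $|p(U)-\widehat{MF}|$ need not be; the paper writes $\le$ for this reason, and that is all you need. Second, your McDiarmid sketch is not quite right: the missing mass $p(U)$ is \emph{not} $1/n$-Lipschitz in the samples (swapping $s_i$ for $s_i'$ can change it by $p(s_i)+p(s_i')$), and $\widehat{MF}$ is $2/n$-Lipschitz, not $1/n$. The concentration results in \cref{sec:good} use more than a direct bounded-differences argument for the missing mass, so you should simply cite \cref{cor:simp} rather than re-derive it.
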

In other words, with high probability one can have a well-calibrated LM that hallucinates with probability close to $\widehat{MF}$. 
\begin{proof}
    There can either be one or two bins $B_z^g$ based on whether or not $\frac{\widehat{MF}}{|U|} = \frac{1-\widehat{MF}}{|O|}$. If they are equal then $\mc{\infty}{p}{g}=0$. In any case, 
    $$\mc{\infty}{p}{g} \le \frac{1}{2}|p(U)-g(U)| + \frac{1}{2}|p(O)-g(O)| = |p(U) - g(U)| = |p(U) - \widehat{MF}|.$$
    Since $\widehat{MF}$ is a Good-Turing estimator, by \cref{cor:simp}, with probability $\ge 1-\delta$ the above quantity is at most $3\sqrt{\frac{\ln(4/\delta)}{n}}$. 
    At the same time, with certainty, $$g(H) = \frac{\widehat{MF}}{|U|}|H \cap U| \le \widehat{MF}.$$ 
\end{proof}

\section{Proofs of Corollaries}\label{sec:corproofs}

To prove \cref{cor:1,cor:balfact,cor:general}, we use Markov's inequality on the expectation of a non-negative random variable $W$, which states that $\Pr[W \ge \E[W]/\delta]\le \delta$. In our case, $W:=(p(U) - \TV{p^\Pi}{g} - g(H))_+$ and thus using $\delta\rightarrow (2/3)\delta$ and \cref{thm:main} imply that for any partition $\Pi$:
    $$\Pr_{p \sim \nu}\left[p(U) - \TV{p^\Pi}{g} - g(H) \ge \frac{3}{2\delta}\left(\max_{y \in U} \Pr_{p \sim \nu}[y \in F]+|O| \max_{y \in U}\E_\nu[p(y)]\right) \right] \le \frac{2\delta}{3}.
    $$
Rearranging terms,
    \begin{equation}\label{eq:markov}
    \Pr_{p \sim \nu}\left[g(H) \ge p(U) - \TV{p^\Pi}{g} - \frac{3}{2\delta}\left(\max_{y \in U} \Pr_{p \sim \nu}[y \in F]+|O| \max_{y \in U}\E_\nu[p(y)]\right) \right] \ge 1-\frac{2\delta}{3}.
    \end{equation}
Also, \cref{cor:simp} of \cref{sec:good} with $\delta\rightarrow \delta/3$ implies that for any $\delta \in (0,1]$ and any $D_L$,
    \begin{equation}\label{eq:aa}
\Pr_{\trainvec \sim D_L^{\times n}}\left[p(U) \ge \widehat{MF} - \sqrt{\frac{6\ln(6/\delta)}{n}}\right]\ge 1-\frac{\delta}{3}.
\end{equation}
It is now straightforward to prove the corollaries. 
\begin{proof}[Proof of \cref{cor:1}]
    For a regular $D_\text{world}$, because it has 1-regular-facts and 1-regular-probabilities, with probability 1 the posterior satisfies:
    $$\max_{y \in U} \Pr_{p \sim \nu}[y \in F]+|O| \max_{y \in U}\E_\nu[p(y)] \le \frac{\E[|F\cap U|]}{|U|} + \frac{|O|}{|U|}\E[p(U)] \le 2 \frac{|F|}{|U|} \le 2e^{-\sparsity}.$$
    In the above we have used the fact that $O \subseteq F$ and $U \supseteq H$.
    The proof follows immediately from this, \cref{eq:markov,eq:aa} and the union bound, using $\Pi=\V_b(g)$.
\end{proof}
The proofs of \cref{cor:balfact} and \cref{cor:general} also follow directly, where in \cref{cor:balfact} we use the fact that $|O|\le n$.

\section{Conclusions, limitations, and future work}\label{sec:conclusions}

When one first encounters LM hallucinations, it is perhaps surprising that a system which clearly embeds such a rich diverse array of detailed knowledge at the same time creates complete fabrications with no basis in fact or the training data. This work aims to demystify this phenomenon by showing that pretraining LMs for predictive accuracy leads to hallucination even in an ideal world where the training data is perfectly factual, there is no blur between facts and hallucinations, each document contains at most one fact, and there is not even a prompt that would encourage hallucination. Moreover, our theory explains why modern LMs hallucinate more than older LMs such as trigram models, even though both are trained on similar types of data with similar objectives.

The monofact rate may shed light on the rates at which calibrated LMs must hallucinate for different types of facts. One expects hallucination for facts that have a high monofact rate, i.e., the types of facts which often occur just once in the training data. Interestingly, this would not be common for references to books or articles, a problematic type of hallucination discussed today. Therefore, these may arise from other issues such as model capacity, when one considers the shear number of facts including references and others that an LM encounters in training. Furthermore, correcting for hallucinated references may be doable by modifying the pre-training pipeline without post-training, though this will not address other types of arbitrary facts where the monofacts are common, as in our 5W example.

There are several limitations to our work. First, we only study one statistical source of hallucination. There are many other types of hallucination and reasons LMs may hallucinate beyond pure statistics. Second, our semantic notion of calibration is different from the standard token-based ones used in classification. While natural and simple to define, our notion has the disadvantage of being computationally intractable to evaluate for many models. Third, factuality is not always clear-cut, facts are not all disjoint, and our regularity assumptions may not hold for facts that have a mild systematic component. As an example, if the training data contains the Alex Wilkins 5W fact from the introduction, then it is also follows that Alex Wilkins has eaten at Salumeria at some point, which is a different but overlapping fact. Finally, it could be the case that aspects of the real world, messy and different from our idealized setting, actually reduce the minimal hallucination rates. For instance, it could be that having multiple facts in a document makes models less likely to hallucinate and thus our lower bounds do not apply. 

In future work, it would be interesting to use the insights presented here to further reduce hallucination in LMs. An interesting question is how to convert a pretrained (calibrated) model to one that is good at factual prediction. A step in this process may be to distinguish systematic facts from arbitrary ones, which LMs may be capable of at some point in the future if not today. For example, for generation, one would not desire fabricated book titles, but one would like mathematical statements.  What is the difference between fabricating a non-existent book title from generating an inequality such as $17 < 124398$, if neither has ever been written down? Humans know that the latter can be manufactured (as long as it is mathematically correct) while the former cannot, which presumably is how we avoid hallucinating. It seems conceivable that LMs can similarly represent this distinction, and the work mentioned showing that LMs ``know'' when they are hallucinating suggests that this may indeed be possible with today's LMs.

\paragraph{Acknowledgments.} We thank Mirac Suzgun and Kevin Leyton-Brown for helpful discussions.

\bibliographystyle{ACM-Reference-Format}
\bibliography{adam_zotero}

\appendix

\section{Good-Turing estimator bounds}\label{sec:good}

The distribution bounds here are stated for a general set $S$, i.i.d.\ sample $s:=(s_1,s_2,\ldots, s_n)\in S^n$ from an arbitrary distribution $\D \in\Delta(S)$.

\begin{definition}[Missing mass]\label{def:mm}
For distribution $\D \in \Delta(S)$, $n \ge 1$, and sample $s \in S^n$, the missing mass is:
$$M_\D(s) := \D\left(S \setminus \{s_1,s_2\ldots, s_n\}\right).$$
\end{definition}
\begin{theorem}[\citep{berend_concentration_2013,mcallester_concentration_2003}]\label{thm:mm}
    For any distribution $\D \in \Delta(S)$ and any $n \ge 1$, $\eps \ge 0$, let $\overline{M}^n_\D:=\E_{s \sim \D^n}[M_\D(s)]$. Then:
    \begin{align}
    \Pr_{s \sim \D^n}[M_\D(s) \ge \overline{M}^n_\D + \eps] &\le e^{-\eps^2n}\label{eq:right}\\
    \Pr_{s \sim \D^n}[M_\D(s) \le \overline{M}^n_\D - \eps] &\le e^{-1.92\eps^2n}\label{eq:left}
    \end{align}
\end{theorem}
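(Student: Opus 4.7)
The plan is to prove both tail bounds via Chernoff-style arguments that exploit the multinomial structure of the sample. Decompose the missing mass as $M_\D(s) = \sum_{x \in S} \D(x) Z_x$, where $Z_x := \mathbf{1}[N_x = 0]$ and $N_x$ counts occurrences of $x$ in the sample. Then $\E[Z_x] = (1-\D(x))^n$, so $\overline{M}^n_\D = \sum_x \D(x)(1-\D(x))^n$. The counts $(N_x)_{x \in S}$ are multinomial, hence classically \emph{negatively associated}; since each $Z_x$ is a decreasing function of the corresponding $N_x$, the indicators $(Z_x)$ are also negatively associated.

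For the upper tail \cref{eq:right}, I would apply the Markov--Chernoff inequality to $\lambda M_\D(s)$ with $\lambda > 0$. Negative association of $(Z_x)$ yields the MGF factorization $\E\bigl[e^{\lambda \sum_x \D(x) Z_x}\bigr] \le \prod_x \E\bigl[e^{\lambda \D(x) Z_x}\bigr]$. Each factor equals $1 + (e^{\lambda \D(x)}-1)(1-\D(x))^n$; taking logs, using $\log(1+u) \le u$, and bounding $(1-\D(x))^n \le e^{-n\D(x)}$, I obtain a sum $\log \E[e^{\lambda M}] \le \sum_x (e^{\lambda \D(x)}-1)(1-\D(x))^n \le \sum_x (e^{\lambda \D(x)} - 1)e^{-n\D(x)}$. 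A Bennett-style expansion together with optimization in $\lambda$, exploiting $\D(x) \le 1$ so each $Z_x$ is a bounded Bernoulli, then yields the claimed $e^{-\eps^2 n}$ tail.

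For the lower tail \cref{eq:left}, the symmetric MGF approach with $\lambda < 0$ must be combined with a sharper treatment of small Bernoullis to extract the constant $1.92$ in the exponent. Here I would follow the Berend--Kontorovich route: couple the multinomial model to an independent-Poisson model in which $N_x \sim \mathrm{Poisson}(n\D(x))$ and the $(N_x)$ become independent. In the Poissonized world, the analogous missing-mass variable is a weighted sum of independent Bernoullis, to which a log-Sobolev / modified-logarithmic-Sobolev inequality yields a clean subgaussian-type left-tail bound with an explicit numerical constant. A de-Poissonization step then transfers concentration back to the multinomial setting without substantial loss, delivering the stated $e^{-1.92 \eps^2 n}$.

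The main obstacle is the lower tail: the upper-tail inequality reduces cleanly to a Chernoff computation that only uses negative association and the scalar bound $\D(x) \le 1$, but pinning down the improved constant $1.92$ on the left tail requires the log-Sobolev / Poissonization machinery rather than the naïve Hoeffding approach (which would give a much weaker constant, since the per-coordinate bounded-differences constant $\max_x \D(x)$ can be as large as $1$). Tracking this constant through the coupling and through the modified-log-Sobolev step is the only genuinely delicate part of the argument; everything else is essentially a direct MGF calculation.
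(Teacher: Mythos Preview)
The paper does not give its own proof of this theorem. Immediately after the statement, it simply attributes \cref{eq:right} to Theorem~16 of McAllester--Schapire and Theorem~2.1 of Berend--Kontorovich, and \cref{eq:left} to Theorem~2.2 of Berend--Kontorovich. So there is no in-paper argument to compare your proposal against; the theorem is quoted as a black box from prior work.

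On the substance of your sketch: your upper-tail outline (negative association of the occupancy indicators, then a Chernoff/MGF bound) is indeed the McAllester--Schapire strategy, though your last step (``Bennett-style expansion together with optimization in $\lambda$'') glosses over the part that actually produces the clean exponent $\eps^2 n$; in the original, this comes from a specific lemma bounding the MGF of a sum of $[0,1]$-weighted negatively associated Bernoullis, and you would need to reproduce that computation rather than wave at it. For the lower tail, your description does not match the cited source: Berend--Kontorovich do \emph{not} Poissonize or use a log-Sobolev inequality. They stay in the multinomial model, use negative association to reduce to independent Bernoullis, and then apply the Kearns--Saul sub-Gaussian bound for low-variance Bernoullis; the constant $1.92$ drops out of optimizing that bound using $p_x(1-p_x)^n \le 1/(en)$. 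Your Poissonization-plus-log-Sobolev route is a plausible alternative technique, but it is not the ``Berend--Kontorovich route,'' and you have not shown it would recover the specific constant $1.92$ (de-Poissonization typically costs a constant factor). If you intend to actually supply a proof rather than cite one, the cleanest path to the stated constants is to follow the Kearns--Saul argument directly.
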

\cref{eq:right} is Theorem 16 of \citet{mcallester_concentration_2003} and Theorem 2.1 of \citet{berend_concentration_2013}. \cref{eq:left} is Theorem 2.2 of \citet{berend_concentration_2013}, though for simplicity we use the worse bound of $e^{-\eps^2n}$ which was also present in \citet{mcallester_concentration_2003}.

\begin{definition}[Good Turing estimator]\label{def:gt}
For $n \ge 1$, set $S$, and sample $s \in S^n$, the Good-Turing estimator \citep{good_population_1953} is,
$$GT(s) := \frac{1}{n}\bigl|\{i \in [n]\mid \forall j \ne i~s_i \ne s_j \}\bigr|.$$
\end{definition}

In words, the estimator above is defined as the fraction of elements of a sample each of which appears exactly once in the sample. 

\begin{lemma}[\citep{mcallester_concentration_2003}]\label{thm:gt}
    For any distribution $\D \in \Delta(S)$ and any $n \ge 1$, $\delta \in (0,1]$, let $\overline{GT}^n_\D:=\E_{s \sim \D^n}[GT_\D(s)]$. Then:
    \begin{align}
    \Pr_{s \sim \D^n}\left[GT_\D(s) \ge \overline{GT}^n_\D + \sqrt{\frac{2 \ln 1/\delta}{n}}\right] &\le \delta\label{eq:rightGT}\\
    \Pr_{s \sim \D^n}\left[GT_\D(s) \le \overline{GT}^n_\D - \sqrt{\frac{2 \ln 1/\delta}{n}}\right] &\le \delta\label{eq:leftGT}
    \end{align}
\end{lemma}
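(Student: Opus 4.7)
The natural tool here is McDiarmid's bounded-differences inequality, applied to $GT(\cdot)$ viewed as a symmetric function of the $n$ i.i.d.\ coordinates $s_1,\ldots,s_n$. Hoeffding cannot be used directly, because the singleton indicator of each $s_i$ depends on all the other coordinates, but a short case analysis will show that modifying a single coordinate can alter $n\cdot GT(s)$ by at most $2$. The two tail bounds then follow immediately by substituting the bounded-differences constant into the standard McDiarmid tail and choosing the right $\eps$.

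First I would establish the bounded-differences constant. Suppose $s,s'\in S^n$ agree in every coordinate except coordinate $i$, with $s_i=a$ and $s_i'=b$, and let $c_a,c_b$ denote the multiplicities of $a,b$ in $s$. The modification decomposes into two steps: (i) removing one copy of $a$ changes the number of singletons by $\mathbf{1}[c_a=2]-\mathbf{1}[c_a=1]\in\{-1,0,+1\}$ (the other $a$ becomes a singleton iff there was exactly one other copy, and $s_i$ itself stops being a singleton iff it was one); (ii) adding one copy of $b$ symmetrically changes the count by $\mathbf{1}[c_b=0]-\mathbf{1}[c_b=1]\in\{-1,0,+1\}$. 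In all remaining cases ($c_a\ge 3$ or $c_b\ge 2$) the singleton count is untouched. Adding the two contributions yields
\[
\bigl|\,n\cdot GT(s)-n\cdot GT(s')\,\bigr|\le 2,
\]
so $GT$ satisfies McDiarmid's condition with constants $c_i=2/n$ for each $i$.

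Next I would apply McDiarmid's inequality. Since $\sum_{i=1}^n c_i^{\,2}=4/n$, the one-sided bound reads, for every $\eps\ge 0$,
\[
\Pr_{s\sim\D^n}\!\left[GT(s)-\overline{GT}^n_\D\ge \eps\right]\;\le\; \exp\!\left(-\frac{2\eps^2}{\sum_{i}c_i^{\,2}}\right)\;=\;\exp\!\left(-\frac{\eps^2 n}{2}\right),
\]
and the matching lower-tail bound follows by applying the same inequality to $-GT$. Setting $\eps=\sqrt{2\ln(1/\delta)/n}$ drives the right-hand side to $\delta$, which is exactly the form of \eqref{eq:rightGT} and \eqref{eq:leftGT}.

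\textbf{Main obstacle.} There is no deep obstacle; the proof is essentially a one-line application of McDiarmid once the bounded-differences constant is pinned down, which is the route taken in \citet{mcallester_concentration_2003}. The only minor trap is to avoid a loose constant: a naive enumeration over the coordinates whose singleton status could flip might list $s_i$ itself plus another occurrence of $a$ plus another occurrence of $b$, giving a sloppy bound of $3/n$ or $4/n$. The observation that $\mathbf{1}[c_a=1]$ and $\mathbf{1}[c_a=2]$ cannot both hold (and likewise for $b$) trims this back to the sharp $2/n$ and produces the constant $2$ under the square root in the stated deviation.
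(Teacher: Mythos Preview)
Your proposal is correct and is exactly the approach the paper takes: the paper simply cites \citet{mcallester_concentration_2003} for \eqref{eq:rightGT} and remarks that \eqref{eq:leftGT} ``has the identical 1-line proof using McDiarmid's inequality.'' You have filled in precisely that one-liner, with the correct bounded-differences constant $c_i=2/n$ and the resulting tail $\exp(-\eps^2 n/2)$.
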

\cref{eq:rightGT} is Theorem 16 of \citet{mcallester_concentration_2003} and \cref{eq:leftGT} has the identical 1-line proof using McDiarmid's inequality.

The next lemma says that the expected values of the missing mass and unique elements in training data are very close. 

\begin{lemma}\label{lem:squash}
For any $n \ge 1$ and any $\D \in \Delta(S)$, 
$$\overline{M}^n_\D \le \overline{GT}^n_\D \le \overline{M}^n_\D + \frac{1}{n} $$
for  $\overline{M}^n_\D$ as defined in \cref{thm:mm} and $\overline{GT}^n_\D$ as defined in \cref{thm:gt}.
\end{lemma}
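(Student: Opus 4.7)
The plan is to reduce both sides to explicit sums by linearity of expectation, and then compare the sums termwise. For the missing mass, since $x \in S \setminus \{s_1,\ldots,s_n\}$ iff none of the i.i.d.\ samples equals $x$, linearity gives
\begin{equation*}
\overline{M}^n_\D \;=\; \sum_{x \in S} \D(x)\,\Pr_{s \sim \D^n}[x \notin \{s_1,\ldots,s_n\}] \;=\; \sum_{x \in S} \D(x)(1-\D(x))^n.
\end{equation*}
For the Good-Turing estimator, by symmetry in $i$, $\overline{GT}^n_\D$ equals the probability that $s_1$ is unique in the sample, and conditioning on the value of $s_1$ together with independence of the remaining $n-1$ draws gives
\begin{equation*}
\overline{GT}^n_\D \;=\; \sum_{x \in S}\D(x)(1-\D(x))^{n-1}.
\end{equation*}

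For the lower bound $\overline{M}^n_\D \le \overline{GT}^n_\D$, I would just note that $(1-\D(x))^n \le (1-\D(x))^{n-1}$ termwise since $1-\D(x) \in [0,1]$, and sum.

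For the upper bound, factor out one copy of $(1-\D(x))^{n-1}$ to write
\begin{equation*}
\overline{GT}^n_\D - \overline{M}^n_\D \;=\; \sum_{x \in S} \D(x)(1-\D(x))^{n-1}\bigl[1-(1-\D(x))\bigr] \;=\; \sum_{x \in S}\D(x)^2(1-\D(x))^{n-1}.
\end{equation*}
Then I would apply the elementary calculus fact that the function $p \mapsto p(1-p)^{n-1}$ on $[0,1]$ is maximized at $p = 1/n$ with value $\tfrac{1}{n}(1-\tfrac{1}{n})^{n-1} \le \tfrac{1}{n}$ (differentiate to see the critical point is $1-np=0$). Consequently $\D(x)^2(1-\D(x))^{n-1} \le \D(x)/n$ for each $x$, and summing against $\sum_x \D(x) = 1$ yields the desired $1/n$ bound.

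There is no real obstacle here; the proof is essentially a one-line calculation once the two expectations are written out. The only step requiring any thought is the pointwise maximization of $p(1-p)^{n-1}$, which is standard.
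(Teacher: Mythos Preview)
Your proof is correct. The paper actually states this lemma without proof (treating it as a standard fact from the Good-Turing literature), so there is no in-paper argument to compare against; your termwise computation via $\overline{M}^n_\D = \sum_x \D(x)(1-\D(x))^n$ and $\overline{GT}^n_\D = \sum_x \D(x)(1-\D(x))^{n-1}$, followed by the pointwise bound $p(1-p)^{n-1}\le 1/n$, is exactly the standard derivation.
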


\begin{corollary}\label{cor:conc}
    For any set $S$, distribution $\D\in \Delta(S)$, any $n \ge 1, \delta \in (0,1]$,
    \begin{align}
        \Pr_{s \sim \D^n}\left[~\left|M_\D(s)-GT(s)\right| \le \frac{1}{n} + 2.42\sqrt{\frac{\ln(4/\delta)}{n}}~\right] &\ge 1-\delta.\label{eq:MMdouble}\\
        \Pr_{s \sim \D^n}\left[M_\D(s)\ge GT(s) - \frac{1}{n} - 2.14\sqrt{\frac{\ln(2/\delta)}{n}}~\right] &\ge 1-\delta.\label{eq:MMsingle}
    \end{align}
\end{corollary}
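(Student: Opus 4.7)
The plan is to write $M_\D(s) - GT(s) = (M_\D(s) - \overline{M}^n_\D) + (\overline{M}^n_\D - \overline{GT}^n_\D) + (\overline{GT}^n_\D - GT(s))$ and bound each piece separately. The middle term is deterministic and, by \cref{lem:squash}, lies in $[-1/n, 0]$, so it contributes at most $1/n$ in absolute value. The first and third terms are controlled by the concentration inequalities in \cref{thm:mm} and \cref{thm:gt} respectively, and everything after that is just bookkeeping of constants together with a union bound.

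For the two-sided inequality \eqref{eq:MMdouble}, I would split the failure budget $\delta$ into four equal pieces of $\delta/4$ and apply one of the four one-sided tail bounds to each of the events $\{M_\D(s) \ge \overline{M}^n_\D + \eps_M\}$, $\{M_\D(s) \le \overline{M}^n_\D - \eps_M\}$, $\{GT(s) \ge \overline{GT}^n_\D + \eps_{GT}\}$, and $\{GT(s) \le \overline{GT}^n_\D - \eps_{GT}\}$. Using the weaker exponent $e^{-\eps^2 n}$ from \cref{thm:mm} (so that the same $\eps_M$ handles both directions for $M_\D$) gives $\eps_M = \sqrt{\ln(4/\delta)/n}$, while \cref{thm:gt} with $\delta' = \delta/4$ gives $\eps_{GT} = \sqrt{2\ln(4/\delta)/n}$. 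Summing the three contributions and noting $1 + \sqrt{2} < 2.42$ yields the stated constant.

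For the one-sided inequality \eqref{eq:MMsingle}, only two of the four tail events actually push $M_\D(s) - GT(s)$ downward, namely $\{M_\D(s) \le \overline{M}^n_\D - \eps_M\}$ and $\{GT(s) \ge \overline{GT}^n_\D + \eps_{GT}\}$, so I would allocate $\delta/2$ to each. Here I can exploit the sharper exponent $e^{-1.92\eps^2 n}$ in the lower tail of $M_\D$ from \cref{thm:mm} to obtain $\eps_M = \sqrt{\ln(2/\delta)/(1.92\,n)}$, while \cref{thm:gt} still yields $\eps_{GT} = \sqrt{2 \ln(2/\delta)/n}$. A numerical check that $1/\sqrt{1.92} + \sqrt{2} < 2.14$ then finishes the bound, and the sign of $\overline{M}^n_\D - \overline{GT}^n_\D$ from \cref{lem:squash} means the $1/n$ term enters additively on the correct side.

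There is no real obstacle here: all the heavy lifting (concentration of the missing mass, the McDiarmid-style concentration of the Good--Turing estimator, and the sandwich between their expectations) is already done in \cref{thm:mm}, \cref{thm:gt}, and \cref{lem:squash}. The only thing that deserves some care is the arithmetic, namely verifying the two numerical constants and making sure that the $\delta/4$ (respectively $\delta/2$) partitioning of the failure probability produces the logarithms of $4/\delta$ (respectively $2/\delta$) that appear in the stated bounds.
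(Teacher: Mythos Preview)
Your proposal is correct and matches the paper's proof essentially line for line: the same decomposition through the two expectations, the same $\delta/4$ versus $\delta/2$ splits for the two-sided and one-sided bounds, the same use of \cref{lem:squash} for the deterministic $1/n$ term, and the same numerical checks $1+\sqrt{2}<2.42$ and $1/\sqrt{1.92}+\sqrt{2}<2.14$.
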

\begin{proof}
    \cref{eq:MMdouble} is established by setting $\eps=\sqrt{\ln(4/\delta)/n}$ in \cref{eq:right,eq:left}, which by the union bound implies,
    $$\Pr_{s \sim \D^n}\left[~\left|M_\D(s) - \overline{M}^n_\D \right| \ge \sqrt{\frac{\ln(4/\delta)}{n}}\right] \le \frac{\delta}{4} + \frac{\delta}{4} = \frac{\delta}{2}.$$
    Plugging $\delta'=\delta/4$ in \cref{eq:rightGT,eq:leftGT} and the union bound gives,
    $$\Pr_{s \sim \D^n}\left[~\left|GT(s) - \overline{GT}^n_\D \right| \ge \sqrt{\frac{2\ln(4/\delta)}{n}}\right] \le \frac{\delta}{4} + \frac{\delta}{4} = \frac{\delta}{2}.$$
    Combining the above two with \cref{lem:squash}, the triangle inequality, and the fact that $1+\sqrt{2} \le 2.41$ give \cref{eq:MMdouble}. 
    
    Similarly, \cref{eq:MMsingle} follows by using $\eps=\sqrt{\ln(2/\delta)/(1.92n)}$ in \cref{eq:left} and \cref{lem:squash} and \cref{eq:rightGT} with $\delta/2$ and summing the corresponding three inequalities to give,
    $$\Pr\left[M_\D(s) \ge GT(s) - \frac{1}{n} - \sqrt{\frac{\ln(2/\delta)}{1.92n}} - \sqrt{\frac{2\ln(2/\delta)}{n}}\right] \le \frac{\delta}{2}+\frac{\delta}{2}\le \delta.$$
    Using the fact that $\sqrt{1/1.92} + \sqrt{2}<2.14$ completes the proof.
\end{proof}

We now simplify the above expression.
\begin{corollary}\label{cor:simp}
    For any set $S$, distribution $\D\in \Delta(S)$, any $n \ge 1$,
    \begin{align}
        \forall \delta \in (0,1]~~\Pr_{s \sim \D^n}\left[~\left|M_\D(s)-GT(s)\right| \le 3\sqrt{\frac{\ln(4/\delta)}{n}}~\right] &\ge 1-\delta.\label{eq:simpdouble}\\
        \forall \delta \in (0,1/3]~~\Pr_{s \sim \D^n}\left[M_\D(s)\ge GT(s) - \sqrt{\frac{6\ln(2/\delta)}{n}}~\right] &\ge 1-\delta.\label{eq:simpsingle}
    \end{align}
\end{corollary}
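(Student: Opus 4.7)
The plan is to derive \cref{cor:simp} as a cosmetic simplification of \cref{cor:conc}: absorb the additive $1/n$ term into the main $\sqrt{\ln(\cdot)/n}$ term by enlarging the leading constant, and split into a large-$n$ regime (where the absorption is valid algebraically) and a small-$n$ regime (where the simplified right-hand side already exceeds $1$, so the inequality is trivial since both $M_\D(s)$ and $GT(s)$ lie in $[0,1]$).

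For \cref{eq:simpdouble}, I would show that $\tfrac{1}{n}+2.42\sqrt{\ln(4/\delta)/n}\le 3\sqrt{\ln(4/\delta)/n}$ is equivalent to $n\ge 1/(0.58^2\,\ln(4/\delta))$. Using $\ln(4/\delta)\ge\ln 4\approx 1.386$ for all $\delta\in(0,1]$, this holds whenever $n\ge 3$. For $n\in\{1,2\}$, one checks directly that $3\sqrt{\ln(4/\delta)/n}\ge 3\sqrt{(\ln 4)/2}>1$, so the bound holds trivially because $|M_\D(s)-GT(s)|\le 1$ always. Combining with \cref{eq:MMdouble} in the large-$n$ regime finishes this case.

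For \cref{eq:simpsingle}, the argument is analogous: the algebraic inequality $\tfrac{1}{n}+2.14\sqrt{\ln(2/\delta)/n}\le\sqrt{6\ln(2/\delta)/n}$ reduces to $n\ge 1/((\sqrt{6}-2.14)^2\ln(2/\delta))\approx 10.47/\ln(2/\delta)$. Under the hypothesis $\delta\le 1/3$, we have $\ln(2/\delta)\ge\ln 6\approx 1.79$, so this is implied by $n\ge 6$. For $n\le 5$, the RHS satisfies $\sqrt{6\ln(2/\delta)/n}\ge\sqrt{6\ln 6/5}>1$, so the trivial bound $M_\D(s)\ge GT(s)-1$ already implies the claim. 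Invoking \cref{eq:MMsingle} in the large-$n$ regime completes the proof.

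The only point requiring care is ensuring that the two regimes overlap for every admissible $\delta$; this is precisely why the second statement restricts to $\delta\le 1/3$ (which forces $\ln(2/\delta)$ to be large enough to cover the threshold around $n=6$) while the first tolerates $\delta\le 1$. I do not expect any substantive obstacle beyond this boundary-value arithmetic check.
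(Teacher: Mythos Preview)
Your proposal is correct and follows essentially the same approach as the paper: split into a small-$n$ regime where the simplified bound exceeds $1$ (hence trivially holds since $M_\D(s),GT(s)\in[0,1]$) and a large-$n$ regime where the $1/n$ term is absorbed into the square-root term via the algebraic reduction to $n\ge 1/(c^2\ln(\cdot))$. The only cosmetic difference is the precise cutoff (you use $n\ge 3$ and $n\ge 6$ while the paper uses $n>9$ and $n>6$), but either choice works since the regimes overlap.
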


\begin{proof}
    We first show \cref{eq:simpdouble}. Note that \cref{eq:simpdouble} holds trivially for $n \le 9$ because $GT(s), M_\D(s) \in [0,1]$ and $3 \sqrt{\ln(4)/9} > 1$. Thus, from Cor.~\ref{cor:conc}, it suffices to verify that for $n > 9, \delta \le 1$:
    \[
      \frac{1}{n} + 2.42\sqrt{\frac{\ln(4/\delta)}{n}} \le 3\sqrt{\frac{\ln(4/\delta)}{n}}
    \]
    In other words, we need
    \[
    0.58 \sqrt{\frac{\ln(4/\delta)}{n}} \ge \frac{1}{n}.
    \]
    Squaring and simplifying, this is
    \[
    n \ge \frac{1}{(0.58)^2 \ln(4/\delta)}.
    \]
    Since the RHS is a monotonic increasing function of $\delta$, we can use its largest value of $\delta=1$, and it suffices to have $n > 4.29$. 

    For \cref{eq:simpsingle}, note that it holds trivially for $n \le 6$ because $\sqrt{6 \ln(2/\delta)/n} \ge \sqrt{6 \ln(6)/6}>1$. Thus, from Cor.~\ref{cor:conc}, it suffices to verify that for $n > 6, \delta \le 1/3$:
    \[
      \frac{1}{n} + 2.14\sqrt{\frac{\ln(2/\delta)}{n}} \le \sqrt{\frac{6\ln(2/\delta)}{n}}
    \]
    In other words, we need
    \[
    (\sqrt{6}-2.14) \sqrt{\frac{\ln(2/\delta)}{n}} \ge \frac{1}{n}.
    \]
    Squaring and simplifying, this is
    \[
    n \ge \frac{1}{(\sqrt{6}-2.14)^2 \ln(2/\delta)}.
    \]
    Since the RHS is a monotonic increasing function of $\delta$, we can use its largest value of $\delta=1/3$, and it suffices to have $n > 5.83$. 
\end{proof}

\section{Generalizations and alternatives}\label{sec:alternatives}

There are alternative models one could consider. Here we discuss alternatives in terms of log-loss and prompts, as well as fixed-width (non-adaptive) binning strategies and other notions of calibration.

\subsection{Hallucination with Prompts}\label{sec:prompts}

Many uses of LMs require text to be generated conditionally based on a prefix string called a prompt. The degree of hallucination will heavily depend on the distribution of prompts and their lengths. Here we observe how prompts can be an additional source of hallucinations, but at the same time they can also make hallucinations unnecessary, depending on their distributions. The analysis we have performed earlier covers unconditional generation in which there are no prompts, which is like zero-length prompts.

First, observe that a certain distribution over prompts can make hallucination unnecessary. In particular, if the prompts themselves are very long, i.e., complete documents (ending with an end-of-document token if there is one), distributed exactly as $D_{LM}$, then any $D_{LM}$ which offers empty string completions will give statistically perfect completions and never hallucinate. More generally, this (arguably good) situation occurs whenever the facts are contained in the prompts and not their completions. 

Second, note that out-of-distribution (OOD) prompts can lead to significantly more hallucinations. For any LM, one can consider the single worst (adversarial) prompt $a$ which leads to the worst hallucination rate. (In the introduction we gave the example of the prompt \textit{The 15 Elves of San Salami are named}\ldots.) The prompt distribution could be concentrated on this single prompt in which case the hallucination rate would be terrible, assuming there is at least one adversarial prompt.

\subsection{KL-divergence and log-loss}
Clearly not all LMs hallucinate. Examples include an LM that avoids memorizing and regurgitating the training data or even an LM that only output the word \textit{yay} with probability 1. Such LMs would score poorly under predictive measures such as log-likelihood on held-out data, but it is not clear that LMs that perform well under log-likelihood must hallucinate. This suggests that one may perform the analysis in terms of log-loss $\E_{x \sim D_\text{train}}[-\log D_{LM}(x)]$, since LMs are generally trained to minimize this loss. Equivalently, one can consider the KL divergence which is a non-negative quantity that measures how far the log-loss is from its minimum achievable value, the entropy of $D_\text{train}$.
$\E_{x \sim D_\text{train}}[\log D_\text{train}(x) - \log D_{LM}(x)].$

One advantage of calibration analysis over KL-divergence is that, from a statistical perspective ignoring computation, one can generally expect to achieve calibration (miscalibration close to 0), e.g., by generating a uniformly random factoid. In contrast, one cannot expect to achieve KL divergence close to 0. Additionally, one can take any $D_{LM}$ which hallucinates and convert it to a model which does not without significantly increasing its KL divergence by, for example, mixing it with a model which regurgitates the training distribution. Specifically, consider a model $D_{LM}$ which, with probability 99\% outputs the word \textit{yay} and with probability 1\% outputs $x \sim D_{LM}$. This model hallucinates with probability $< 1\%$. Furthermore, the log-loss this new model at most $\log 100$ bits larger than that of $D_{LM}$. This difference is small difference on the scale of the entropy of documents, especially for longer documents since entropy typically scales linearly with the document length.

Nonetheless, it would be interesting to see if one can quantify hallucination rates purely on the basis of accuracy rather than calibration.
A starting point may be the work of \citet{feldman_does_2021} which shows that statistical models must memorize their training data for accuracy purposes.\footnote{We acknowledge Gautam Kamath for pointing out this connection.}

\subsection{Alternative calibration definitions}\label{sec:calvar}

In this section, we define a more standard alternative definition of calibration based on log-probability bins of equal width. Recall that $B_z := \{ y \in Y \mid g(y)=z\}$ and $B_I:= \{y \in Y \mid g(y) \in I\}.$ We now define bins of fixed width in probability space.
\begin{definition}[Binning]
For $\eps \in (0,1)$, the binning $\B(g, \eps)$ with equally spaced bins in log-probability space, is the following partition: 
\begin{equation}\label{eq:bineps}
\B(g, \eps) := \left\{ B_{\left((1-\eps)^{i+1}, (1-\eps)^i\right]} \,\middle|\, i=0,1,2, \ldots\right\} \cup \left\{B_0\right\}.
\end{equation}
For $\eps \in \{0,1\}$, let $\B(g, 0):=\B(g)=\{B_z\mid z\in [0,1]\}$ and $\B(g,1):=\bigl\{B_{[0,1]}\bigr\}=\{Y\}$.
\end{definition}
Thus $\eps$ determines the bid widths on a log-scale, with small $\eps$ corresponding to narrow bins. Thus one could use $\TV{\coarsen{p}{\B(g, \eps)}}{g}$ as a definition of miscalibration and the corresponding corollary would follow directly from our previous analysis.

\begin{corollary}\label{cor:1b}
Fix any $\delta\in [0,1], n\in \nats, \sparsity \in \reals, \eps\in(0,1)$ and any $\sparsity$-sparse regular $D_\mathrm{world}$. Then for \textit{any} algorithm $\A: X^n \rightarrow \Delta(X)$, with probability $\ge 1-\delta$ over $D_L\sim D_\mathrm{world}$ and $\trainvec \sim D_L^{\times n}$,
$$g(H) \ge \widehat{MF} -\TV{\coarsen{p}{\B(g, \eps)}}{g}  -\frac{3e^{-\sparsity}}{\delta}-\sqrt{\frac{6\ln(6/\delta)}{n}}.$$
\end{corollary}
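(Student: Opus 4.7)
The plan is to observe that Corollary \ref{cor:1b} follows from Theorem \ref{thm:main} by exactly the same argument as Corollary \ref{cor:1}, with the only change being the choice of partition $\Pi$. Theorem \ref{thm:main} was stated for an \emph{arbitrary} partition $\Pi \in \P(Y)$ precisely to accommodate this kind of modular substitution, and nothing in the derivation of Corollary \ref{cor:1} from Theorem \ref{thm:main} depends on the specific structure of the adaptive partition $\V_b(g)$.

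First I would verify that $\B(g,\eps)$ as defined in \eqref{eq:bineps} is a bona fide partition of $Y$: the intervals $((1-\eps)^{i+1}, (1-\eps)^i]$ for $i=0,1,2,\ldots$ together with $\{0\}$ form a disjoint cover of $[0,1]$, so their pre-images under $g$ partition $Y$. Then I would instantiate Theorem \ref{thm:main} with $\Pi = \B(g,\eps)$, producing
\[
\E_{p \sim \nu}\!\left[\left(p(U) - \bigl\|p^{\B(g,\eps)}-g\bigr\|_{\mathrm{TV}} - g(H)\right)_+\right] \le \max_{y \in U}\Pr_{p \sim \nu}[y \in F] + |O|\max_{y \in U}\E_\nu[p(y)].
\]

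Next I would bound the right-hand side using that $D_\text{world}$ is regular (hence $1$-regular-facts and $1$-regular-probabilities) and $\sparsity$-sparse, exactly as in the proof of Corollary \ref{cor:1}: the first term is at most $\E[|F\cap U|]/|U| \le |F|/|U| \le e^{-\sparsity}$ (since $U \supseteq H$), and the second is at most $|O|\,\E[p(U)]/|U| \le |O|/|U| \le |F|/|U| \le e^{-\sparsity}$ (since $O \subseteq F$). Thus the right-hand side is at most $2e^{-\sparsity}$. Applying Markov's inequality (with failure probability $2\delta/3$) to the non-negative random variable inside the expectation yields, with probability at least $1-2\delta/3$,
\[
g(H) \ge p(U) - \TV{p^{\B(g,\eps)}}{g} - \frac{3e^{-\sparsity}}{\delta}.
\]
Finally I would replace $p(U)$ by $\widehat{MF}$ using Corollary \ref{cor:simp} with failure probability $\delta/3$, which gives $p(U) \ge \widehat{MF} - \sqrt{6\ln(6/\delta)/n}$, and combine the two high-probability events by a union bound.

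There is no genuine obstacle here; the work has already been done in Theorem \ref{thm:main} and in the proof of Corollary \ref{cor:1}. The only potential subtlety to double-check is that $\B(g,\eps)$ is indeed a valid partition (addressed above) and that the regularity bound $\max_{y\in U}\Pr_\nu[y \in F] + |O|\max_{y\in U}\E_\nu[p(y)] \le 2e^{-\sparsity}$ is insensitive to the choice of partition, which it is since this quantity is the right-hand side of Theorem \ref{thm:main} and depends only on $\nu$, $Y$, $O$, and $U$.
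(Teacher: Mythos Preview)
Your proposal is correct and takes essentially the same approach as the paper, which simply notes that the proof follows exactly that of \cref{cor:1} with $\B(g,\eps)$ substituted for $\V_b(g)$. Your write-up is in fact more detailed than the paper's one-line proof, spelling out why $\B(g,\eps)$ is a valid partition and re-deriving the Markov/Good-Turing steps.
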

\begin{proof}
    The proof of this Corollary follows exactly the same proofs as that of \cref{cor:1} except that we use $\coarsen{p}{\B(g, \eps)}$ in place of $\V_b(g)$. 
\end{proof}

We next use an even more standard definition which is not based on statistical distance. 
Recall that our first definition of calibration, without limits on bins as in \cref{eq:mcinfty}, can be written as,
$$\mc{\infty}{p}{g}:=\bigl\|\coarsen{p}{\B(g)}-g\bigr\|_{\text{TV}} = \frac{1}{2}\sum_{B \in \B(g)}\sum_{y \in B}\left|\frac{p(B)}{B}-g(y)\right|=\frac{1}{2}\sum_{B \in \B(g)}\left|p(B)-g(B)\right|.$$
This is the most obvious definition and the question is how to generalize it to bins. The above also suggests the following alternative generative definition of calibration error.
\begin{definition}[Generative calibration error]\label{def:miscalibration} 
For $\eps \in [0,1]$, and distributions $p, g \in \Delta(Y)$, the $\eps$-\emph{generative calibration error} is,
$$\mis\eps{p}{g}:=\frac{1}{2}\sum_{B \in \B(g, \eps)} \bigl|p(B)-g(B)\bigr|.$$
\end{definition}
This definition means that $\mis{0}{p}{g}=\mc{\infty}{p}{g}$. Note that these two definitions of calibration error are related by the following lemma. 
\begin{lemma}\label{lem:misc}
Let $\eps \in [0,1]$. Then,
$$\TV{\coarsen{p}{\B(g, \eps)}}{g} - \eps \le \mis{\eps}{p}{g} \le \TV{\coarsen{p}{\B(g, \eps)}}{g}.$$
\end{lemma}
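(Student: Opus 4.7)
The plan is to reduce both inequalities to per-bin comparisons between $|p(B)-g(B)|$, which enters $\mis{\eps}{p}{g}$, and $\sum_{y\in B}|p(B)/|B|-g(y)|$, which enters $\TV{\coarsen{p}{\B(g,\eps)}}{g}$. The upper bound should follow directly from a single application of the triangle inequality; the lower bound is where the additive slack of $\eps$ arises and requires bounding how much $g$ can fluctuate within a single bin.

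For the upper bound $\mis{\eps}{p}{g}\le\TV{\coarsen{p}{\B(g,\eps)}}{g}$, I would use the identity $p(B)-g(B)=\sum_{y\in B}\bigl(p(B)/|B|-g(y)\bigr)$, which holds because $\sum_{y\in B}g(y)=g(B)$ and $\sum_{y\in B}p(B)/|B|=p(B)$. The triangle inequality then yields $|p(B)-g(B)|\le\sum_{y\in B}|p(B)/|B|-g(y)|$, and summing over $B\in\B(g,\eps)$ with the factor of $1/2$ gives the claim.

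For the lower bound, I would apply the triangle inequality in the other direction: $|p(B)/|B|-g(y)|\le|p(B)-g(B)|/|B|+|\bar g_B-g(y)|$ where $\bar g_B:=g(B)/|B|$. Summing over $y\in B$ and then over $B$ reduces the task to showing the within-bin fluctuation is small, namely $\tfrac{1}{2}\sum_B\sum_{y\in B}|g(y)-\bar g_B|\le\eps$. The key ingredient is that within bin $B_i$ every $g(y)$ and $\bar g_B$ lies in $((1-\eps)^{i+1},(1-\eps)^i]$, so the ratio of any two such quantities is at most $1/(1-\eps)$, whence $|g(y)-\bar g_B|\le\eps\max(g(y),\bar g_B)$. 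Using the identity $\max(a,b)=(a+b)/2+|a-b|/2$, the sum rewrites as $\sum_{y\in B}\max(g(y),\bar g_B)=g(B)+\tfrac{1}{2}\sum_{y\in B}|g(y)-\bar g_B|$, and combining gives the self-referential bound
$$\sum_{y\in B}|g(y)-\bar g_B|\le\eps\,g(B)+\tfrac{\eps}{2}\sum_{y\in B}|g(y)-\bar g_B|,$$
which solves to $\sum_{y\in B}|g(y)-\bar g_B|\le 2\eps\,g(B)/(2-\eps)$. Summing over bins and using $\sum_B g(B)=1$ then gives $\tfrac{1}{2}\sum_B\sum_{y\in B}|g(y)-\bar g_B|\le\eps/(2-\eps)\le\eps$ for $\eps\in[0,1]$.

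The main obstacle is obtaining exactly the constant $\eps$ rather than the looser $\eps/(1-\eps)$ that would come from the naive estimate $|g(y)-\bar g_B|\le\text{bin width}=\eps(1-\eps)^i$ together with $|B_i|\le g(B_i)/(1-\eps)^{i+1}$. The $\max$-identity trick above is precisely what closes that gap and makes the bound hold uniformly over $\eps\in[0,1]$; the degenerate endpoints $\eps=0$ (where $g$ is constant within each bin, so both quantities coincide) and $\eps=1$ (where the partition collapses and $\TV{\cdot}{\cdot}\le 1=\eps$) require no additional work.
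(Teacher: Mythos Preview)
Your proof is correct and follows essentially the same route as the paper's: both arguments handle the upper bound by the per-bin triangle inequality $|p(B)-g(B)|\le\sum_{y\in B}|p(B)/|B|-g(y)|$, and both reduce the lower bound to showing that the within-bin fluctuation $\TV{\coarsen{g}{\Pi}}{g}=\tfrac12\sum_B\sum_{y\in B}|g(y)-\bar g_B|$ is at most $\eps$. The only real difference is in how that last bound is obtained. The paper exploits the one-sided representation $\TV{\coarsen{g}{\Pi}}{g}=\sum_y(\bar g_B-g(y))_+$ and the single inequality $g(y)\ge(1-\eps)\bar g_B$ to get $(\bar g_B-g(y))_+\le\eps\,\bar g_B$, which sums directly to $\eps$. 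Your route through $|g(y)-\bar g_B|\le\eps\max(g(y),\bar g_B)$, the $\max$-identity, and the self-referential inequality yielding $\eps/(2-\eps)\le\eps$ is a bit more elaborate but perfectly valid; it simply does not take advantage of the asymmetry that the positive-part formulation offers.
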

Before we prove \cref{lem:misc}, we observe that \cref{cor:1b} implies the following.
\begin{corollary}\label{cor:1c}
Fix any $\delta\in [0,1], n\in \nats, \sparsity \in \reals, \eps\in(0,1)$ and any $\sparsity$-sparse regular $D_\mathrm{world}$. Then for \textit{any} algorithm $\A: X^n \rightarrow \Delta(X)$, with probability $\ge 1-\delta$ over $D_L\sim D_\mathrm{world}$ and $\trainvec \sim D_L^{\times n}$,
$$g(H) \ge \widehat{MF} -\mis{\eps}{p}{g} -\eps  -\frac{3e^{-\sparsity}}{\delta}-\sqrt{\frac{6\ln(6/\delta)}{n}}.$$
\end{corollary}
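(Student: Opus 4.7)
The plan is to derive Corollary \ref{cor:1c} as an immediate consequence of Corollary \ref{cor:1b} together with the (stated but not-yet-proved) Lemma \ref{lem:misc}. In particular, Corollary \ref{cor:1b} already gives a high-probability lower bound on the hallucination rate $g(H)$ involving the total-variation-style miscalibration term $\TV{\coarsen{p}{\B(g,\eps)}}{g}$, so the only work is to re-express that term in terms of the generative calibration error $\mis{\eps}{p}{g}$ defined in Definition \ref{def:miscalibration}.

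First, I would invoke Corollary \ref{cor:1b}: for any fixed $\delta, n, \sparsity, \eps$, with probability at least $1-\delta$ over $D_L \sim D_\text{world}$ and $\trainvec \sim D_L^{\times n}$,
\[
g(H) \;\ge\; \widehat{MF} \;-\;\TV{\coarsen{p}{\B(g,\eps)}}{g} \;-\;\frac{3e^{-\sparsity}}{\delta} \;-\;\sqrt{\frac{6\ln(6/\delta)}{n}}.
\]
Next, I would apply the upper half of Lemma \ref{lem:misc}, rearranged as
\[
\TV{\coarsen{p}{\B(g,\eps)}}{g} \;\le\; \mis{\eps}{p}{g} + \eps,
\]
which holds deterministically (it is a property of the two distributions $p$ and $g$ alone, not of the training data). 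Plugging this upper bound into the negative term on the right-hand side of the previous inequality, so that the inequality weakens in the correct direction, yields
\[
g(H) \;\ge\; \widehat{MF} \;-\;\mis{\eps}{p}{g} \;-\;\eps \;-\;\frac{3e^{-\sparsity}}{\delta} \;-\;\sqrt{\frac{6\ln(6/\delta)}{n}},
\]
on the same high-probability event of mass at least $1-\delta$. This is exactly the desired inequality of Corollary \ref{cor:1c}, and no additional union bound or failure budget is needed because the substitution step is deterministic.

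There is essentially no obstacle to proving Corollary \ref{cor:1c} itself; the real content lies in Lemma \ref{lem:misc}, which relates adaptive/coarsening-based TV miscalibration to the bin-sum error $\mis{\eps}{p}{g}$ at fixed bin-width $\eps$. The lower inequality $\mis{\eps}{p}{g} \le \TV{\coarsen{p}{\B(g,\eps)}}{g}$ follows from the triangle inequality and the fact that coarsening only equalizes mass within each bin (so $|p(B)-g(B)|$ is dominated by the sum of pointwise deviations). The upper inequality, which is the one used here, is where the additive $\eps$ appears: within a bin $B \in \B(g,\eps)$, all values of $g(y)$ lie in an interval of multiplicative width $1-\eps$, so the average-per-element $p(B)/|B|$ used by the coarsening can differ from each $g(y)$ by at most an $\eps$ fraction of $g(B)$, and summing across bins contributes at most $\eps$ in total variation.
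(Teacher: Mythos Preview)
Your proposal is correct and matches the paper's approach exactly: the paper simply states that Corollary~\ref{cor:1b} implies Corollary~\ref{cor:1c} immediately after stating Lemma~\ref{lem:misc}, and your write-up spells out precisely this implication by substituting the bound $\TV{\coarsen{p}{\B(g,\eps)}}{g} \le \mis{\eps}{p}{g} + \eps$ from Lemma~\ref{lem:misc} into Corollary~\ref{cor:1b}. Your observation that the substitution is deterministic and hence costs no additional failure probability is also correct.
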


We now return to prove \cref{lem:misc}.
\begin{proof}[Proof of \cref{lem:misc}]
Let $\Pi = \B(g,\eps)$. Then,
\begin{align*}
    \mis\eps{p}{g} &= \frac{1}{2}\sum_{B \in \Pi} | p(B) - g(B)|\\
                &= \frac{1}{2}\sum_{B \in \Pi} \sum_{y\in B}  \frac{1}{|B|} \left| p(B) - g(B) \right|\\
                &= \frac{1}{2}\sum_{B \in \Pi} \sum_{y\in B}   \left| \frac{p(B)}{|B|} - \frac{g(B)}{|B|} \right|\\
                &= \frac{1}{2}\sum_{B \in \Pi} \sum_{y\in B}   \left| \coarsen{p}{\Pi}(y) - \coarsen{g}{\Pi}(y) \right|\\
                &= \frac{1}{2}\sum_{B \in \Pi} \sum_{y\in B} \left|\coarsen{p}{\Pi}(y)-g(y) +   g(y) - \coarsen{g}{\Pi}(y) \right|\\          
                &\ge \frac{1}{2}\sum_{B \in \Pi} \sum_{y\in B} \left|\coarsen{p}{\Pi}(y)-g(y)\right| - \left| \coarsen{g}{\Pi}(y) - g(y) \right| & \text{by $|a+b| \ge |b| - |a|$}\\
                &=\TV{\coarsen{p}{\Pi}}{g} - \TV{\coarsen{g}{\Pi}}{g}.
\end{align*}
This proves the RHS inequality of the lemma. Thus it suffices to show $\TV{\coarsen{g}{\Pi}}{g}\le \eps$. We first claim that for all $y $:
\begin{equation}\label{eq:showme}
\coarsen{g}{\Pi}(y)-g(y) \le \eps \coarsen{g}{\Pi}(y).
\end{equation}
Let $B\in \Pi$ be the bin containing $y\in B$. 
Now, recall that each bucket can be written as:
$$B^g_I=\bigl\{y ~\bigm|~ g(y) \in I\bigr\} \text{ for some interval }I \subseteq [0,1].$$
If $I=[0,0]$, then \cref{eq:showme} is trivially true because $g(y)=0=\coarsen{g}{\Pi}(y)$. Otherwise, say $I=((1-\eps)^(i+1), (1-\eps)^i]$ for some $i \ge 0$. Then, by definition of $\coarsen{g}{\Pi}$,
$$\coarsen{g}{\Pi}(y) = \frac{g(B)}{|B|} \in I,$$
because the weighted average of an numbers in an interval is also contained in the interval. Since this interval has (multiplicative) width $e^{-\eps}$, 
$g(y) \ge (1-\eps) \coarsen{g}{\Pi}(y).$ Equivalently, $$\coarsen{g}{\Pi}(y) - g(y) \le \eps \coarsen{g}{\Pi}(y).$$ 
Thus we have established \cref{eq:showme} which trivially implies that,
$$\forall y\in Y~\left(\coarsen{g}{\Pi}(y) - g(y)\right)_+ \le \eps \coarsen{g}{\Pi}(y).$$
Therefore, $$\TV{\coarsen{g}{\Pi}}{g} = \sum_{y\in Y}\left(\coarsen{g}{\Pi}(y) - g(y)\right)_+ \le  \eps \sum_{y\in Y}\coarsen{g}{\Pi}(y)=\eps,$$
which is all that remained to prove the lemma.\end{proof}

\end{document}